 %IN QUESTA VERSIONE ASPETTI SONO ANCHE CONCETTI CHE COMPAIONO A SINISTRA DELLE INCLUSIONI DI K (ES PENGUIN). QS SERVE PER MOTIVI TECNICI (PROVA TEOREMA CRUCIALE): SI GIUSTIFICA INTUITIVAMENTE?
 %KB EQUIVALENTI HANNO MODELLI EQUIVALENTI?
 %AGGIUNGI NOZ DI RATIONAL CLOSURE E MAIN RESULTS
 
%%%% ijcai11.tex

%\documentclass{article}
% The file ijcai13.sty is the style file for IJCAI-13 (same as ijcai07.sty).
%\usepackage{ijcai13}

% Use the postscript times font!
% The file ijcai11.sty is the style file for IJCAI-11 (same as ijcai07.sty).

\newcommand {\rf} {\mathit{rank}}

\newcommand {\lingconc} {\mathcal{S}}

\newcommand {\nott} {\lnot}

\newcommand {\sx} {\langle}
\newcommand {\dx} {\rangle}

\newcommand {\emme} {\mathcal{M}}

\newcommand {\elle} {\mathcal{L}}

\newcommand {\unione} {\cup}

\newcommand {\tc} {\mid}

\newcommand{\tip}{{\bf T}}

\newcommand{\alc}{\mathcal{ALC}}

\newcommand{\alct}{\mathcal{ALC}+\tip}
\newcommand{\alctmin}{\mathcal{ALC}+\tip_{min}}

\newcommand{\alctr}{\mathcal{ALC}+\tip_{\textsf{\tiny R}}}
\newcommand{\alctre}{{\mathcal{ALC}^{\Ra}\tip}_{E} }

\newcommand{\be}{\begin{enumerate}}
\newcommand{\ee}{\end{enumerate}}

\newcommand{\hide}[1]{}

%\newcommand{\prop}{{\it Prop\/}}

%\newcommand{\th}{{\rm Th\/}}

% ENVIRONMENTS

%
%\newenvironment{proof}
%{\begin{trivlist} \item[] {\bf Proof:}}%
%{\qed \end{trivlist}}
%
%\newenvironment{definition}
%{\begin{defi} \rm}{\qed \end{defi}}
%
%%\newenvironment {proofof}[2]
%%{\begin{trivlist} \item[] {\bf Proof of #1~\protect{\ref{#2}}.}}%
%%{\qed \end{trivlist}}
%
%
%\newenvironment{example}
%{\begin{exa} \rm}{\qed \end{exa}}
%
%\newenvironment{remark}
%{\begin{rem} \rm}{\end{rem}}

% AUXILIARY ENVIRONMENTS

%MACRO NICOLA

\def \cases{\left \{\begin{array}{l}}
\def \endcases{\end{array}\right .}

\newcommand {\Ra} {{\bf R}}

\newcommand {\bes} {\begin{description}}
\newcommand{\ens} {\end{description}}

\newcommand {\beq} {\begin{quote}}
\newcommand {\enq} {\end{quote}}
\newcommand {\bit} {\begin{itemize}}
\newcommand {\enit} {\end{itemize}}

 %Modifiche reviewer A e generali
 %Modifiche reviewer B
 %Modifiche al weakening implicito
 %Modifiche reviewer C

\newenvironment{pozz}{\color{black}}{\color{black}}

\newenvironment{proof}{\noindent {\bf Proof.}} {}

\newtheorem{theorem}{Theorem}
\newtheorem{proposition}{Proposition}
\newtheorem{definition}{Definition}
\newtheorem{example}{Example}
%\newtheorem{remark}[theorem]{Remark}

%\newenvironment{proof}{\noindent \emph{Proof}.}{\begin{flushright} $\blacksquare$ \end{flushright}}

%\newenvironment{rosso}{\color{black}}{\color{black}}
%\newenvironment{pozz}{\color{black}}{\color{black}}
%\newenvironment{proofof}[1] {\vspace{0.2cm} \noindent {\bf Proof of #1.}} {}
%\usepackage{prooftree}
% The following is enclosed to allow easy detection of differences in
% ascii coding.
% Upper-case    A B C D E F G H I J K L M N O P Q R S T U V W X Y Z
% Lower-case    a b c d e f g h i j k l m n o p q r s t u v w x y z
% Digits        0 1 2 3 4 5 6 7 8 9
% Exclamation   !           Double quote "          Hash (number) #
% Dollar        $           Percent      %          Ampersand     &
% Acute accent  '           Left paren   (          Right paren   )
% Asterisk      *           Plus         +          Comma         ,
% minus         -           Point        .          Solidus       /
% Colon         :           Semicolon    ;          Less than     <
% Equals        =3D           Greater than >          Question mark ?
% At            @           Left bracket [          Backslash     \

% Right bracket ]           Circumflex   ^          Underscore    _
% Grave accent  `           Left brace   {          Vertical bar  |
% Right brace   }           Tilde        ~

% A couple of exemplary definitions:

\documentclass[letterpaper]{article}
\usepackage{aaai16}

\usepackage{times}
\usepackage{helvet}
\usepackage{courier}
\setlength{\pdfpagewidth}{8.5in}
\setlength{\pdfpageheight}{11in}

\usepackage{times}
\usepackage{undertilde}
\usepackage{graphicx}
\usepackage{latexsym}
\usepackage{graphicx}
\usepackage{color}
\usepackage{amssymb}
\usepackage{colortbl}
\usepackage{amsmath}
\usepackage{microtype}%if unwanted, comment out or use option "draft"

%\usepackage{}

%\newtheorem{theorem}{Theorem}[section]
%\newtheorem{lemma}[theorem]{Lemma}
%\newtheorem{corollary}[theorem]{Corollary}
%\newtheorem{proposition}[theorem]{Proposition}
%\newtheorem{definition}[theorem]{Definition}
%\newtheorem{example}[theorem]{Example}
%\newtheorem{remark}[theorem]{Remark}
%\newtheorem{proof}[theorem]{Proof}
%\newtheorem{fact}[theorem]{Fact}

% The following is enclosed to allow easy detection of differences in
% ascii coding.
% Upper-case    A B C D E F G H I J K L M N O P Q R S T U V W X Y Z
% Lower-case    a b c d e f g h i j k l m n o p q r s t u v w x y z
% Digits        0 1 2 3 4 5 6 7 8 9
% Exclamation   !           Double quote "          Hash (number) #
% Dollar        $           Percent      %          Ampersand     &
% Acute accent  '           Left paren   (          Right paren   )
% Asterisk      *           Plus         +          Comma         ,
% minus         -           Point        .          Solidus       /
% Colon         :           Semicolon    ;          Less than     <
% Equals        =3D           Greater than >          Question mark ?
% At            @           Left bracket [          Backslash     \

% Right bracket ]           Circumflex   ^          Underscore    _
% Grave accent  `           Left brace   {          Vertical bar  |
% Right brace   }           Tilde        ~

% A couple of exemplary definitions:

 \title{A strengthening of rational closure in DLs: \\
 reasoning about multiple aspects}

\author{Valentina Gliozzi \\ Center for Logic, Language and Cognition\\ Dipartimento di Informatica - 
Universit\`a di Torino - Italy\\ valentina.gliozzi@unito.it}

\begin{document}

 \maketitle
 
 \section{Abstract}
 
% [DIRE CHE LA NOZIONE DI RC CHE ABBIAMO CONSIDERATO IN AIJ E' PER TBOX LA NATURALE TRASPOSIZIONE DI LEHMAN E MAGIDOR. INVECE PER ABOX ABBIAMO ELABORATO PROCEDURA COERENTE CON LO SPIRITO DI RC]

We propose a logical analysis of the concept of typicality, central in human cognition  (Rosch,1978). 
We start from a previously proposed extension of the basic Description Logic $\alc$ %(a computationally tractable fragment of First Order Logic, used to represent concept inclusions and ontologies) 
with a typicality operator $\tip$ that allows to consistently represent the attribution to classes of individuals of properties  with exceptions (as in the classic example (i)typical birds  fly, (ii) penguins are birds but (iii)typical penguins don't fly). We then strengthen this extension in order  to separately reason about the typicality with respect to different aspects (e.g., flying, having nice feather: in the previous example, penguins may not inherit the property of flying, for which they are exceptional, but can nonetheless inherit other properties, such as  having nice feather). 

\section{Introduction}
In \cite{AIJ15} it is proposed  a rational closure strengthening of $\alc$.
This strengthening allows to perform non monotonic reasoning in $\alc$ in a computationally efficient way.
The extension, as already the related logic $\alctmin$ proposed in \cite{AIJ13}  and the weaker (monotonic) logic $\alct$ presented in \cite{FI09}, allows to consistently represent typical properties with exceptions that could not be  represented in standard $\alc$.

\noindent For instance, in all the above logics one can say that:

%\begin{itemize}
{\bf  \small SET 1: 

Typical students don't earn money

Typical working students do earn money

Typical apprentice working students  don't earn money}
%\end{itemize}

\noindent without having to conclude that there cannot exist working students nor apprentice working students. On the contrary, in standard $\alc$ typicality cannot be represented, and these three  propositions  can only be expressed by the stronger ones:

%\begin{itemize}
{ \bf  \small SET 2:

 Students don't earn money {\tiny (Student $\sqsubseteq \neg$ EarnMoney)}

Working students do earn money {\tiny(Worker $\sqcap$ Student $\sqsubseteq$ EarnMoney)}

Apprentice working students  don't earn money {\tiny (Worker $\sqcap$ Apprentice $\sqcap$ Student $\sqsubseteq \neg$ EarnMoney)}}
%\end{itemize}

\noindent  These propositions are consistent in $\alc$ only if there are no working students nor apprentice working students.

In all the extensions of $\alc$ mentioned above one can represent the  set of propositions in $ SET 1$ by means of a typicality operator $\tip$ that, given a concept $C$ (e.g. Student) singles out the most typical instances of $C$: so, for instance,  $\tip(Student)$ refers to the typical instances of the concept Student. The semantics of $\tip$ is given by means of a preference relation $<$ that compares the typicality of two individuals: for any two $x$ and $y$, $x < y$ means that $x$ is more typical than $y$. Typical instances of a concept $C$ are those minimal with respect to $<$ (formally, as we will see later, $(\tip(C))^I = min_<(C)^I$, where $min_<(C)^I = \{x \in C^I: \not\exists y \in C^I$ s.t. $y< x \}$).

The operator $\tip$ has all the properties that, in the analysis of Kraus Lehmann and Magidor \cite{KrausLehmannMagidor:90} any non monotonic entailment should have. For instance, $\tip$ satisfies the principle of cautious monotonicity, according to which if $\tip(Student) \sqsubseteq Young$, then  $\tip(Student) = \tip(Student \sqcap Young)$).
The precise relations  between the properties of $\tip$ and preferential entailment are established in \cite{FI09}. %or rational entailment are established in \cite{FI09,dl2013}. %[FI, DL]. 

Although the extensions of $\alc$ with the typicality operator $\tip$ allow to express $SET 1$ of propositions, the resulting logic is monotonic, and it does not allow to perform some wanted, non monotonic inferences. For instance, it does not allow to deal with irrelevance which is the principle that  from the fact that typical students are young, one would want to derive that typical blond students also are young, since being blond is irrelevant with respect to youth. As another example,
when knowing that an individual, say John, is a student, and given $SET 1$ of propositions, one would want to conclude that John is a typical student and therefore does not earn money. On the other hand, when knowing that John is a working student, one would want to conclude that he is a typical working student and therefore does earn money. In other words one would want to assume that an individual is a typical instance of the most specific class it belongs to, in the absence of information to the contrary.

These stronger inferences all hold in the strengthening of $\alct$ presented in  \cite{AIJ13,AIJ15}. 
In particular, \cite{AIJ15} proposes an adaptation to $\alc$ of the well known mechanism of {\em rational closure}, first proposed by Lehman and Magidor in \cite{whatdoes}. From a semantic point of view, this strengthening of $\alct$ corresponds to restricting one's attention to minimal models, that minimize the height (rank) of all domain elements with respect to $<$ (i.e. that minimize the length of the $<$-chains starting from all individuals).
Under the condition that the models considered are canonical, the semantic characterization corresponds to the syntactical rational closure. This semantics supports all the above wanted inferences, and the nice computational properties of  rational closure guarantee that whether the above inferences are valid or not can be computed in reasonable time.

The main drawback of rational closure is that it is an {\bf all-or-nothing} mechanism: for any subclass $C'$ of $C$ it holds that
either the typical members of $C'$ inherit all the properties of $C$ or they don't inherit any property. Once the typical members of $C'$ are recognized as exceptional with respect to $C$ for a given aspect, they become exceptional for all aspects. Consider the classic birds/penguins example, expressed by propositions: 

%\begin{itemize}
{\bf \small SET 3:

Typical birds have nice feather

Typical birds fly

Penguins are birds

Typical penguins do not fly}
%\end{itemize}

\noindent In this case, since penguins are exceptional with respect to the aspect of flying, they are {\em non-typical} birds, and for this reason they do not inherit any of the typical properties of birds.%don't inherit any of the properties
%of birds, and in particular they do not inherit the property of having nice feather. So, if we further knew that 
%\begin{itemize}
%\item Typical birds have nice feather 
%\end{itemize}
%
%we could not conclude that
%
%\begin{itemize}
%\item (**) Typical penguins have nice feather
%\end{itemize}
%
%Indeed, since penguins are birds that do not fly, they are {\em non-typical} birds, and for this reason they do not inherit any of the typical properties of birds. 

On the contrary, given $SET 3$ of propositions, one wants to conclude that:
\begin{itemize}
\item {\bf (**) Typical penguins have nice feather}
\end{itemize}
This is to say that one wants to separately reason about the different aspects: the property of flying is not related to the property of having nice feather, hence we want to separately reason on the two aspects.
%
%
%A strenghtening  of rational closure in order to deal with these inferences is proposed by %\cite{Delgrande03,Lehmann95}
%\cite{Lehmann95}, for instance. %With respect to [Lehmann]  which is mostly syntactical we here  \color{blue} FRASE TAGLIATA!! NOTA che Lehmann in \cite{Lehmann95} PROPONE UNA SEMANTICA (una model theoretic description) della lexical closure. \normalcolor 
%%
% Here, in order to separately reason about the different aspects, we propose a semantics which is a strengthening of rational closure. In this semantics to all aspects is associated a different preference relation. So for instance given $SET 3$ of propositions above, there will be a preference relation $<_{Flies}$ that compares the typicality of domain elements in a given class (e.g. Birds) with respect to the aspect of flying, and another preference relation $<_{HasNiceFeather}$ that compare the typicality between domain elements of a given class with respect to the aspect of having nice feather. 
%
%

\hide{In (CITA: Giordano, Gliozzi, Olivetti, Pozzato 2009, AIJ2013, DL2013), we have extended the Description Logic $\alc$ with a typicality operator $\tip$, and we have analyzed its rational properties (which are strongly related to the core properties of  non monotonic entailment studied by \cite{klm,whatdoes}).
The semantics of $\tip$ uses a preference relation $<$ that compares the typicality of individuals: for all concepts C, the typical instances of C ($\tip(C)$) are the instances of $C$ minimal w.r.t. $<$. As shown in (Giordano et al., DL2013), when considering only minimal, canonical such models (in which all possible combinations of concepts are represented and in which every individual is assumed to be as typical as possible), we obtain a logic which is an extension to DLs of the well-known mechanism of rational closure, that allows to make the following inferences. 

\begin{itemize}
\item[(i)]Typical birds fly
\item[(ii)]Typical penguins don't fly
\item[(iii)] penguins are birds
\end{itemize}
and 

\begin{itemize}
\item[(iv)]Tweety is a bird
\item[(v)]Pio is a penguin
\end{itemize}

the logic allows to infer that

\begin{itemize}
\item[(vi)]Tweety flies
\item[(vii)]Pio does not fly
\item[(viii)]Typical black bird fly
\end{itemize}

(viii) is the well-known principle of irrelevance, (vi)(vii) show that our logic allows to infer that all individuals are typical instances of the most specific concept they belong to. Notice that (vi)(vii)(viii) are not supported by  standard, classical DLs such as $\alc$, which cannot even consistently represent (i)-(v).

The weakness of this logic (as of propositional rational closure \cite{whatdoes}) is that it is an all-or-nothing approach: if a subclass is exceptional for a given aspect (in the example, for flying), it does not inherit any of the properties of the more general class. In the example, if we further knew that 
\begin{itemize}
\item[(ix)]typical birds have nice feather 
\end{itemize}

we could not conclude that

\begin{itemize}
\item[(x)] typical penguins have nice feather
\end{itemize}

Indeed, since penguins are birds that do not fly, they are {\em non-typical} birds, and for this reason they do not inherit any of the typical properties of birds.  The above logic does not allow to separately reason about different aspects.

We here propose a strengthening of the above logic by introducing several preference relations to separately reason about distinct aspect: a preference relation $<_{Fly}$ will be used to express typicality with respect to flying, a distinct  $<_{HasNiceFeather}$ will be used to express typicality with respect to the property of having nice feather. We show that the resulting approach is a strengthening of the approach previously presented that allows to separately reason on different aspects and, for instance, allows to infer (x) from (i)-(ix).

} % fine hide
%quiqui
Here we propose a strengthening of the  semantics used for rational closure in $\alc$ \cite{AIJ15} that only used a single preference relation $<$ by allowing, beside $<$, several preference relations that compare the typicality of individuals with respect to a given aspect. {\bf Obtaining a strengthening of rational closure is the purpose of this work. This puts strong constraints on the resulting semantics, and defines  the horizon of this work}.
In this new semantics we can express the fact that, for instance, $x$ is more typical than $y$ with respect to the property of flying but $y$ is more typical that $x$ with respect to some other property, as the property of having nice feather. To this purpose we consider preference relations indexed by concepts that stand for the above mentioned aspects under which we compare individuals. So we will write that  $x   <_A y$ to mean that $x$ is preferred to $y$ for what concerns aspect $A$: for instance $x <_{Fly} y$ means that $x$ is more typical than $y$ with respect to the property of flying.

\hide{Starting from the various $<_A$:
\begin{itemize}
\item starting from the various preference relations $<_A$ we can define a unique preference relation $<$ satisfying the properties of $<$ in \cite{AIJ15}, DL (?). NO: SI OTTIENE QUALCOSA DI PIU FORTE
\item when restricting attention to minimal canonical models, this semantics proves to be stronger than rational closure
\end{itemize}
}%fine hide

We therefore proceed as follows: we first recall the semantics of the extension of $\alc$ with a typicality operator which was at the basis of the definition of rational closure and semantics in 
\cite{dl2013,AIJ15}. We then expand this semantics by introducing several preference relations, that we then minimize obtaining our new minimal models' mechanism. As we will see this new semantics leads to a strengthening of rational closure, allowing to separately reason about the inheritance of different properties.

%-------------------------------------------------

%\vspace{-0.1cm}
\section{The  operator $\tip$ and the General Semantics}\label{sez:semantica}
%\vspace{-0.3cm}
Let us briefly recall %the DLs $\alctr$ and $\alctr$  introduced in \cite{FI09,ecai2010DLs}, respectively. 
the logic $\alctr$  which is at the basis of a rational closure construction  proposed in \cite{AIJ15} for $\alc$.
The intuitive idea of $\alctr$ is to extend the standard $\alc$ with concepts of the form $\tip(C)$,  whose intuitive meaning is that
$\tip(C)$ selects the {\em typical} instances of a concept $C$, to distinguish between the properties that
hold for all instances of concept $C$ ($C \sqsubseteq D$), and those that only hold for the typical
such instances ($\tip(C) \sqsubseteq D$).  The $\alctr$ language is defined as follows:
%that we call  \tip-inclusions, where $C$ is a concept not mentioning $\tip$.
\hide{Formally, the language is defined as follows. }
 %\vspace{-0.10cm}
 %\begin{definition}\label{defelt}
 $C_R:= A \tc \top \tc \bot \tc  \nott C_R \tc C_R \sqcap C_R \tc C_R \sqcup C_R \tc \forall R.C_R \tc \exists R.C_R$, and
   $C_L:= C_R \tc  \tip(C_R)$, where $A$ is a concept name and $R$ a role name.
    A KB is a pair (TBox, ABox). TBox contains a finite set
of  concept inclusions  $C_L \sqsubseteq C_R$. ABox
contains a finite set of assertions of the form $C_L(a)$ and $R(a,b)$, where $a, b$ are individual constants.
%\end{definition}
%\vspace{-0.05cm}

\noindent The semantics of $\alctr$  is defined
in terms of rational
%\footnote{We use the expression
%``rational model'' rather than ``ranked model'' which is also used
%in the literature in order to avoid any confusion with the notion
%of rank used in rational closure.} 
models:
 ordinary models of $\alc$ are equipped with a \emph{preference relation} $<$ on
the domain, whose intuitive meaning is to compare the ``typicality''
of domain elements: $x < y$ means that $x$ is more typical than
$y$. Typical members of a concept $C$, instances of
$\tip(C)$, are the members $x$ of $C$ that are minimal with respect
to $<$ (such that there is no other member of $C$
more typical than $x$). In rational models $<$ is further assumed to be modular: for all $x, y, z \in \Delta$, if
$x < y$ then either $x < z$ or $z < y$. These rational models characterize
$\alctr$.

\vspace{-0.15cm}
\begin{definition}[Semantics of $\alctr$ \cite{AIJ15}]\label{semalctr} A model $\emme$ of $\alctr$ is any
structure $\langle \Delta, <, I \rangle$ where: $\Delta$ is the
domain;   $<$ is an irreflexive, transitive, and modular  relation over
$\Delta$ that satisfies the \emph{finite chain condition}%\footnote{Since $\alctr$ has the finite model property, this is equivalent to having the Smoothness Condition, as shown in \cite{AIJ15}. We choose this formulation because it is simpler} 
(there is no infinite $<$-descending chain, hence if $S \neq \emptyset$, also $min_<(S) \neq \emptyset$); $I$ is the extension function that maps each
concept name $C$ to $C^I \subseteq \Delta$, each role name $R$
to  $R^I \subseteq \Delta^I \times \Delta^I$
and each individual constant $a \in \mathcal{O}$ to $a^I \in \Delta$.
For concepts of
$\alc$, $C^I$ is defined in the usual way. For the $\tip$ operator, we have
$(\tip(C))^I = min_<(C^I)$.
\end{definition}
\vspace{-0.2cm}

As shown in \cite{AIJ15}, the logic $\alctr$ enjoys the  finite model property and finite $\alctr$ models can be equivalently defined by postulating the existence of
a function $k_{\emme}: \Delta \longmapsto \mathbb{N}$, where $k_{\emme}$ assigns a finite rank to each world: the rank $k_{\emme}$  of a domain element $x \in \Delta$ is the
length of the longest chain $x_0 < \dots < x$ from $x$
to a minimal $x_0$ (s. t. there is no ${x'}$ with  ${x'} < x_0$). The rank $k_\emme(C_R)$ of a concept $C_R$ in $\emme$ is $i = min\{k_\emme(x):
x \in C_R^I\}$.

A model $\emme$ satisfies a knowledge base $K$=(TBox,ABox) if it satisfies  its TBox (and for all  inclusions $C \sqsubseteq D$  in TBox, it holds $C^I \subseteq D^I$), 
and its ABox (for all $C(a)$  in ABox,  $a^I \in C^I$, and for all $aRb$ in
ABox,  $(a^I,b^I) \in R^I$).
%\end{definition}
%
%\vspace{-0.15cm}
%
%
%\begin{definition}[Query]\label{def:query}
% A {\em query} $F$ is either an assertion $C_L(a)$ or an inclusion relation $C_L \sqsubseteq C_R$. Given a model $\emme = \langle \Delta, <, I \rangle$, a query $F=C_L(a)$ holds in $\emme$ if $a^I \in C_L^I$, whereas a query $F=C_L \sqsubseteq C_R$ holds in $\emme$ if $C_L^I \subseteq C_R^I$.
% \end{definition}
% 
% 
% \begin{definition}[Logical entailment]\label{def:entailment}
 A query $F$ (either an assertion $C_L(a)$ or an inclusion relation $C_L \sqsubseteq C_R$) is logically (rationally) entailed by a knowledge base $K$ ($K \models_{\alctr} F$) if $F$ holds in all models satisfying $K$. %If 
 %As we consider rational models (Definition \ref{semalctr}) we will say that $F$ is rationally entailed by $K$, and we will write . 
 %\end{definition}

Although the typicality operator $\tip$ itself  is nonmonotonic (i.e.
$\tip(C) \sqsubseteq D$ does not imply $\tip(C \sqcap E)
\sqsubseteq D$), the logic $\alctr$ is monotonic: what is logically entailed by $K$ is still entailed by any $K'$ with $K \subseteq K'$.

In \cite{dl2013,AIJ15} the non monotonic mechanism of rational closure has been defined over $\alctr$, which extends to DLs
the notion of rational closure proposed in the propositional context by Lehmann and Magidor \cite{whatdoes}.  The definition is based on the notion of exceptionality. Roughly speaking $\tip(C) \sqsubseteq D$ holds (is included in the rational closure) of $K$ if $C$ (indeed, $C \sqcap D$) is less exceptional than $C \sqcap \neg D$. We briefly recall this construction and we refer to \cite{dl2013,AIJ15} for full details.
Here we only consider rational closure of TBox, defined as follows.

\begin{definition}[Exceptionality of concepts and inclusions]\label{definition_exceptionality}
Let $T_B$ be a TBox and $C$ a concept. $C$ is
said to be {\em exceptional} for $T_B$ if and only if $T_B \models_{\alctr} \tip(\top) \sqsubseteq
\neg C$. A \tip-inclusion $\tip(C) \sqsubseteq D$ is exceptional for $T_B$ if $C$ is exceptional for $T_B$. The set of \tip-inclusions of $T_B$ which are exceptional in $T_B$ will be denoted
as $\mathcal{E}$$(T_B)$.
\end{definition}

\noindent Given a DL  TBox,
it is possible to define a sequence of non increasing subsets of
TBox ordered according to the exceptionality of the elements $E_0 \supseteq E_1, E_1 \supseteq E_2, \dots$ by letting $E_0 =\mbox{TBox}$ and, for
$i>0$, $E_i=\mathcal{E}$$(E_{i-1}) \unione \{ C \sqsubseteq D \in \mbox{TBox}$ s.t. $\tip$ does not occurr in $C\}$.
Observe that, being KB finite, there is
an $n\geq 0$ such that, for all $m> n, E_m = E_n$ or $E_m =\emptyset$.
%Observe also that the definition of the $E_i$'s is the same as the definition of the $C_i$'s in Lehmann and Magidor's rational closure \cite{KrausLehmannMagidor:90},
%except for that here, at each step, we also add all the ``strict'' inclusions $C \sqsubseteq D$ (where $\tip$ does not occur in $C$).
A concept $C$ has {\em rank} $i$ (denoted $\rf(C)=i$) for TBox,
iff $i$ is the least natural number for which $C$ is
not exceptional for $E_{i}$. {If $C$ is exceptional for all
$E_{i}$ then $\rf(C)=\infty$ ($C$ has no rank).}

Rational closure builds on this notion of exceptionality:

\begin{definition}[Rational closure of TBox] \label{def:rational closureDL}
Let KB = (TBox, ABox) be a DL knowledge base. The
rational closure  of TBox %\begin{center}
    $\mbox{$\overline{\mathit{TBox}}$}=\{\tip(C) \sqsubseteq D \tc \mbox{either} \ \rf(C) < \rf(C \sqcap \nott D)$ $\mbox{or} \ \rf(C)=\infty\} \ \unione \
    \{C \sqsubseteq D \tc \ \mbox{KB} \ \models_{\alctr} C \sqsubseteq D\}$,
where $C$ and $D$ are $\alc$ concepts.
%\end{center}
\end{definition}

%\noindent In \cite{SUBMITTEDAIJ} it is shown that reasoning in $\alctr$ is \textsc{ExpTime} complete, that is to say adding the $\tip$ operator does not affect the complexity of the underlying DL $\alc$ [VERO????]. 
%[A QUALE PROBLEMA SI RIFERISCE?]
%\begin{theorem}[Complexity of $\alctr$]\label{complexityalctr}
%Reasoning in $\alctr$ is \textsc{ExpTime} complete.
%\end{theorem}
As a very interesting property, in the context of DLs, the rational closure has a very interesting complexity:
%In \cite{DL2014} it is shown that the rational extension of the logic $\shiq$ with the typicality operator, called $\shiqrt$, can be polynomially 
%encoded into $\shiq$.
%The same result also holds for $\alc$, so that $\alctr$  can be polynomially 
%encoded into $\alc$ and reasoning in $\alctr$ has the same complexity as reasoning in $\alc$,
%in particular, 
deciding  if an inclusion $\tip(C) \sqsubseteq D$ belongs to the rational closure of TBox is a problem in \textsc{ExpTime} \cite{AIJ15}.

%\noindent As already mentioned, although the typicality operator $\tip$ itself  is nonmonotonic (i.e.
%$\tip(C) \sqsubseteq D$ does not imply $\tip(C \sqcap E)
%\sqsubseteq D$), the logics $\alctr$ and $\alctr$ are monotonic: what is inferred
%from $K$ can still be inferred from any $K'$ with $K \subseteq K'$. 
In \cite{AIJ15} it is shown that the semantics corresponding to rational closure can be given in terms of minimal canonical $\alctr$ models. With respect to standard $\alctr$ models, in these models the rank of each domain element is as low as possible (each domain element is assumed to be as typical as possible). %Furthermore, all possible combinations of concepts are represented. These two properties are expressed by the two following definitions.
This is expressed by the following definition.

 \begin{definition}[Minimal models of $K$ (with respect to $TBox$)]\label{Preference between models in case of fixed valuation} 
Given $\emme = $$\langle \Delta, <, I \rangle$ and $\emme' =
\langle \Delta', <', I' \rangle$ , we say that $\emme$ is preferred to
$\emme'$ \hide{with respect to the fixed interpretations minimal
semantics} ($\emme < \emme'$) if:
$\Delta = \Delta'$,
$C^I = C^{I'}$ for all concepts $C$,
for all $x \in \Delta$, it holds that $ k_{\emme}(x) \leq k_{\emme'}(x)$ whereas
there exists $y \in \Delta$ such that $ k_{\emme}(y) < k_{\emme'}(y)$.

Given a knowledge base $K = \langle TBox, ABox \rangle$, we say that
$\emme$ is a minimal model of $K$ (with respect to TBox)  if it is a model satisfying $K$ and  there is no
$\emme'$ model satisfying $K$ such that $\emme' < \emme$.
\end{definition}
Furthermore, the models corresponding to rational closure are canonical. This property, expressed by the following definition, is needed when reasoning about the (relative) rank of the concepts: it is important to have them all represented.

%\begin{definition}[minimal entailment in $\alctr$ ] We say that
%$K$ minimally entails a query $F$  if $F$ holds in all models
% that minimally satisfy $K$, w.r.t. Definition \ref{model-minimally-satisfying-k}. We write$K \models_{{\alctr}_{\mathit{min}}} F$).
% \end{definition}
%
%In order to define canonical models, we consider all the sets of concepts $\{C_1, C_2, \dots,$ $ C_n\} \subseteq \lingconc$ that are {\em consistent with $K$}, i.e., s.t. $K \not\models_{\alctr} C_1 \sqcap C_2 \sqcap \dots \sqcap C_n \sqsubseteq \bot$.

\begin{definition}[Canonical model\hide{with respect to $\lingconc$}]\label{def-canonical-model-DL}
Given $K$=(TBox,ABox),%and a query $F$, 
a  model $\emme=$$\sx \Delta, <, I \dx$ satisfying $K$ is %said to be
{\em canonical} %with respect to $\lingconc$} if, 
 if for each set of concepts
$\{C_1, C_2, \dots, C_n\}$% \subseteq \lingconc$ 
consistent with $K$, there exists (at least) a domain element $x \in \Delta$ such that
%$x \in C^I$ for each combination $C$ in $\mathcal{S}$ consistent with $K$.
$x \in (C_1 \sqcap C_2 \sqcap \dots \sqcap C_n)^I$. \end{definition}
%
%As said above, the intuition is that a canonical model contains all the individuals that enjoy properties that are consistent with the knowledge base. This is needed when reasoning about the (relative) rank of the concepts: it is important to have them all represented.
%
%
\begin{definition}[Minimal canonical models (with respect to TBox)]\label{Preference between models wrt TBox}
$\emme$ is a canonical model of $K$ minimal with respect to TBox 
if it satisfies $K$, it is  minimal with respect to $TBox$ (Definition \ref{Preference between models in case of fixed
valuation}) and it is canonical (Definition \ref{def-canonical-model-DL}).
\end{definition}

The correspondence between minimal canonical models and rational closure is established by the following key theorem. %STATEMENT UNICO O DA RIDURRE IN VARI STATEMENTS?
%RIPRENDI DEF RATIONAL CLOSURE

\begin{theorem}[\cite{AIJ15}]\label{Theorem_RC_TBox}
Let $K$=(TBox,ABox) be a knowledge base and $C \sqsubseteq D$ a query.
We have that $C \sqsubseteq D \in$ $\overline{\mathit{TBox}}$ if and only if $C \sqsubseteq D$ holds in all minimal  canonical models 
of $K$ with respect to TBox (Definition \ref{Preference between models wrt TBox}).
\end{theorem}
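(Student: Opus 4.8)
\medskip
\noindent\textbf{Proof plan.}\ The crux is a \emph{rank--rank correspondence}: in every model $\emme$ of $K$ that is canonical and minimal with respect to TBox, and for every $\alc$-concept $C$, one has $k_\emme(C)=\rf(C)$, with the conventions that $k_\emme(C)=\infty$ iff $C^I=\emptyset$ and $\rf(C)=\infty$ iff $C$ is exceptional for every $E_i$. Granting this, the theorem follows quickly. For a \tip-inclusion $\tip(C)\sqsubseteq D$: since $<$ is modular with no infinite descending chain, $\emme\models\tip(C)\sqsubseteq D$ iff $min_<(C^I)\subseteq D^I$ iff ($C^I=\emptyset$ or $k_\emme(C\sqcap D)<k_\emme(C\sqcap\neg D)$); using the correspondence and the elementary identities $k_\emme(C)=\min\{k_\emme(C\sqcap D),k_\emme(C\sqcap\neg D)\}$ and $\rf(C)=\min\{\rf(C\sqcap D),\rf(C\sqcap\neg D)\}$, this is equivalent to ($\rf(C)=\infty$ or $\rf(C)<\rf(C\sqcap\neg D)$), precisely the membership condition of Definition~\ref{def:rational closureDL}. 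For a \tip-free inclusion $C\sqsubseteq D$: by canonicity (Definition~\ref{def-canonical-model-DL}), $\emme\models C\sqsubseteq D$ iff $(C\sqcap\neg D)^I=\emptyset$ iff $C\sqcap\neg D$ is inconsistent with $K$ iff $K\models_{\alctr}C\sqsubseteq D$, so $C\sqsubseteq D$ holds in all minimal canonical models iff it is rationally entailed, again as in Definition~\ref{def:rational closureDL}. A preliminary step is to note that such models exist when $K$ is consistent, by the finite model property and the finite chain condition.

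To establish the correspondence I would first unfold Definition~\ref{definition_exceptionality} into its standard \emph{materialization} form: writing $\delta_i$ for the conjunction of the materializations $\neg E\sqcup F$ of the \tip-inclusions still present in $E_i$, one has $\rf(C)\le i$ iff $C\sqcap\delta_i$ is satisfiable in an $\alc$ model of the \tip-free part of the TBox. Then I would prove the two inequalities separately. The \textbf{lower bound} $k_\emme(C)\ge\rf(C)$ holds in \emph{every} model of $K$ and uses only modularity and the fact that $\emme\models K$: by induction on $m$, if some $x\in C^I$ has rank $m$ then for each \tip-inclusion $\tip(E)\sqsubseteq F\in E_m$ with $x\in E^I$ the rank of $E$ in $\emme$ must be exactly $m$ (it cannot be smaller, as $E$ is exceptional for $E_{m-1}$ and the induction hypothesis applies), whence $x\in min_<(E^I)$ and $x\in F^I$; so $x$ witnesses that $C\sqcap\delta_m$ is satisfiable under the \tip-free part of the TBox, i.e.\ $\rf(C)\le m$. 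The \textbf{upper bound} $k_\emme(C)\le\rf(C)$ is where canonicity and minimality enter: if $\rf(C)=i$ then $C\sqcap\delta_i$ is consistent with $K$, so canonicity yields $x\in(C\sqcap\delta_i)^I$ in $\emme$; were $k_\emme(x)>i$, lowering the rank of $x$ to $i$ would still yield a model of $K$ --- the only \tip-inclusions possibly affected are the $\tip(E)\sqsubseteq F$ with $x\in E^I$, and for those surviving in $E_i$ the membership $x\in\delta_i^I$ already ensures $x\in F^I$, while for those absent from $E_i$ the antecedent $E$ has rank $<i$ by the lower bound, so $x$ fails to be $<$-minimal in $E^I$ --- contradicting minimality of $\emme$. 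A compact way to package this is to observe that the family of rank functions on $\Delta$ compatible with the fixed valuation is closed under pointwise minimum, so a minimal model carries the pointwise-least compatible rank function, which one then identifies with $\rf$.

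The main obstacle is the upper bound, specifically the re-ranking argument showing that a minimal canonical model never assigns an unnecessarily high rank. It combines (i) canonicity, used to import a witness of non-exceptionality at level $i$ --- which forces one to work with a finite set of relevant concepts so that ``every $K$-consistent combination is realised'' is a finitary requirement --- with (ii) a careful check that re-ranking a single element preserves \emph{every} \tip-inclusion of the TBox, where the already-established lower bound, applied to the antecedents of the \tip-inclusions dropped at level $i$, is exactly the ingredient needed. The case $\rf(C)=\infty$ is then immediate: by the lower bound a concept exceptional at all levels is empty in $\emme$, and by canonicity a concept of finite rank is $K$-consistent, hence nonempty.
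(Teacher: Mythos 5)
First, note that the paper does not prove this theorem at all: it is imported verbatim from \cite{AIJ15}, so there is no in-paper proof to compare against. Measured against the actual argument in \cite{AIJ15}, your architecture is the right one and essentially the standard one: the whole theorem is made to rest on the rank--rank correspondence $k_\emme(C)=\rf(C)$ in minimal canonical models, proved as a lower bound valid in all models of $K$ plus an upper bound using canonicity and a re-ranking/minimality argument, and the reduction of both kinds of inclusions to rank comparisons (via $\rf(C)=\min\{\rf(C\sqcap D),\rf(C\sqcap\neg D)\}$ and its semantic counterpart) is correct.

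There is, however, one concrete slip in the upper-bound step, which you yourself identify as the main obstacle. When you lower $k_\emme(x)$ to $i$ and check the \tip-inclusions $\tip(E)\sqsubseteq F$ with $x\in E^I$ that were dropped before level $i$, you claim that $x$ fails to be $<$-minimal in $E^I$ ``by the lower bound.'' The lower bound gives $k_\emme(E)\ge\rf(E)$, which is compatible with \emph{every} element of $E^I$ sitting at rank $\ge i$; what you actually need is $k_\emme(E)\le\rf(E)<i$, i.e.\ the \emph{upper} bound applied to the antecedent $E$, whose rational-closure rank is strictly below $i$. So the upper bound cannot be proved for a single concept in isolation: it must be set up as an induction on $i$ (establish $k_\emme(D)=\rf(D)$ for all concepts $D$ with $\rf(D)<i$ before treating rank $i$), and your closing ``pointwise minimum of compatible rank functions'' packaging does not substitute for this. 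A second, smaller gloss: to import the witness $x\in(C\sqcap\delta_i)^I$ via canonicity you need the set $\{C\}\cup\{\neg E\sqcup F:\tip(E)\sqsubseteq F\in E_i\}$ to be consistent with the \emph{whole} $K$ (Definition \ref{def-canonical-model-DL} only realises $K$-consistent sets), not merely satisfiable together with $E_i$ and the strict part; this holds, but requires the standard stacking construction and should be stated. Both repairs are local and the overall plan goes through once they are made.
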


%da qui parte ABox
\hide{In \cite{AIJ15} the notion of rational closure is extended in order to deal with ABox. [MENZIONA ALGORITMO]
The semantic notions just provided are extended as follows in order to deal with ABox.
%\begin{definition}[minimal models of $ABox$]\label{model-minimally-satisfying-k}
%Given a knowledge base $K = \langle TBox, ABox\rangle$, we say that an $\alctr$   model 
%$\emme$  minimally satisfies $K$ if 
%\begin{itemize}
%\item $\emme$ is minimal w.r.t. Definition \ref{Preference between models in case of fixed
%valuation};
%\item $\emme$ satisfies K; 
%\item there is no other minimal $\emme'$ satisfying $K$ such that:
%\subitem[-]  for all individual constants $a$ occurring in ABox 
%$k_\emme(a^I) \leq k_\emme(a^{I'})$, and  for all $A_i$ $k_{{A_i}_\emme}(a^I) \leq k_{{A_i}_{\emme'}}(a^{I'})$
%\subitem[-] there is at least one individual constant $b$ occurring in ABox such that  $k_\emme(b^I) < k_\emme(b^{I'})$ or 
%$k_{{A_i}_\emme}(b^I) < k_{{A_i}_\emme}(b^{I'})$ for some $A_i$
%\end{itemize}
%\end{definition}
\begin{definition}[Canonical model of $K$ minimally satisfying ABox]\label{model-minimally-satisfying-Abox}
Given \\ $K$=(TBox,ABox), let $\emme = $$\langle \Delta, <, I \rangle$ and $\emme' =
\langle \Delta', <', I' \rangle$ be two canonical models of $K$ which are minimal with respect to TBox (Definition \ref{Preference between models wrt TBox}). We say that $\emme$ is preferred to $\emme'$ with respect to ABox, and we write $\emme <_{\mathit{ABox}} \emme'$, if, for all individual constants $a$ occurring in ABox, it holds that $k_{\emme}(a^I) \leq k_{\emme'}(a^{I'})$ and there is at least one individual constant $b$ occurring in ABox such that  $k_{\emme}(b^I) < k_{\emme'}(b^{I'})$.
\end{definition}

The following theorem (Theorem 12 in \cite{AIJ15})  shows that these last models correctly capture rational closure applied to ABox. 
\begin{theorem}[Semantic characterization of $\overline{\mathit{ABox}}$]\label{Semantic-rational-closure-Abox}
Given $K$=(TBox, ABox), for all  individual constant $a$ in ABox, we have that $C(a) \in$ $\overline{\mathit{ABox}}$
just in case $C(a)$ holds in all minimal canonical models of $K$ minimally satisfying ABox (Definition \ref{model-minimally-satisfying-Abox}).
\end{theorem}
} %a qui fine ABox

\section{Semantics with several preference relations}

The main weakness of rational closure, despite its power and its nice computational properties, is that it is an all-or-nothing mechanism that does not allow to separately reason on single aspects. To overcome this difficulty, we here consider models with  several preference relations, one for each aspect we want to reason about. We assume this is any concept occurring in K:
%More formally given a knowledge base $K$ and a query $F$, 
%we define "aspect" (to mean an aspect we want to reason about) any concept $A$ occurring either on the left or on the right side of an inclusion of $K$,\hide{or in $F$} %[PER IL MOMENTO BISOGNA AVERE ANCHE I CONCETTI A SINISTRA DI INCLUSIONE, QUESTO MI SERVE PER PROVARE IL TEOREMA \ref{stronger_semantics}], and
we call ${\cal L_{A}}$ the set of these aspects (observe that $A$ may be non-atomic). 
For each aspect $A$, $<_A$ expresses the preference for aspect $A$ : $<_{Fly}$ expresses the preference for flying, so if we know that $\tip(Bird) \sqsubseteq Fly$, birds that do fly will be preferred  to birds that do not fly, with respect to aspect fly, i.e. with respect to $<_{Fly}$. All these preferences, as well as the global preference relation $<$, satisfy the properties in Definition \ref{def-enrichedmodelR} below.  
%Beside the preference relations indexed by concepts occurring on the right-hand side of inclusions in $K$ or in the query, we consider also the relations $<_{A \sqcup B}$ where $A$ and $B$ occur on the right-hand side of inclusions in $K$ or in the query.[[SERVE?]]
%The role of the single $<_A$s will become clear in the next section, where we strengthen this semantics by a minimal models mechanism.
%More formally, given a $K$ and a query $F$ we define the set  = \{A_i$ s.t. $A_i \sqsubseteq C$ in $K$ or $A_i \sqsubseteq C$\}$. Concepts occurring in ${\cal L_{A}} $  correspond to "Aspects" we want to reason about. 
We now enrich the definition of an $\alctr$ model given above (Definition \ref{semalctr}) by taking into account preferences with respect to all of the aspects. In the semantics we can express that for instance $x <_{A_i} y$, whereas $y <_{A_j} x$ ($x$ is preferred to $y$ for aspect $A_i$ but $y$ is preferred to $x$ for aspect $A_j$). 
%For the moment, given a knowledge base $K$, we assume that  ${\cal L_{A}} = \{A_i$ that occur in $K$ $\}$ 

This semantic richness allows to obtain a strengthening of rational closure in which typicality with respect to every aspect is maximized.
Since we want to compare our approach to rational closure,  we keep the language the same than in $\alctr$. In particular, we only have one single  typicality operator $\tip$. However, the semantic richness could motivate the introduction of  several typicality  operators $\tip_{A_1} \dots \tip_{A_n}$ by which one might want to explicitly talk in the language about the typicality w.r.t. aspect $A_1$, or $A_2$, and so on. We leave this extension for future work. 

\begin{definition}[Enriched rational models]\label{def-enrichedmodelR}
Given a knowledge base K, we call an enriched rational model  a structure $\emme = \langle \Delta, <, <_{A_1}, \dots ,<_{A_n}, I \rangle$, where
$\Delta$, $I$ are defined as in Definition \ref{semalctr}, and $<, <_{A_1}, \dots ,<_{A_n}$ are preference relations over $\Delta$, with the properties of being irreflexive, transitive, satisfying the finite chain condition, modular (for all $x, y, z \in \Delta$, if $x <_{A_i} y$ then either $x <_{A_i} z$ or $z <_{A_i} y$). 

For  all $<_{A_i}$ and for $<$  it holds that $min_{<_{A_i}}(S) = \{x \in S$ s.t. there is no $x_1 \in S$ s.t. $x_1 <_{A_i} x\}$ 
and $min_<(S) = \{x \in S$ s.t. there is no $x_1 \in S$ s.t. $x_1 < x\}$ and $(\tip(C))^I = min_<(C^I)$.
%\item[(vi)]  $min_{A_i}(S) = min_{\neg A_i}(S)$;
%\item[(vii)] for all $C \in \elle_{Antec}$: if $C^I \neq \emptyset$, then $\bigcap_{A_i}(min_{A_i}(C)) \neq \emptyset$;

$<$ satisfies the further conditions that $x < y$ if:\\
%$x < y$ if inductively defined (on the number of $<$ already inserted) as follows: \\
\noindent (a) there is $A_i$ such that $x <_{A_i} y$, and there is no $A_j$ such that $y <_{A_j} x$ or;\\
\noindent (b) there is $\tip(C_i) \sqsubseteq A_i \in K$ s.t. $ y \in (C_i \sqcap \neg A_i)^I$, and for all $\tip(C_j) \sqsubseteq A_j \in K$ s.t.  $x \in (C_j \sqcap \neg A_j)^I$ , there is $\tip(C_k) \sqsubseteq A_k \in K$ s.t.  $y \in (C_k \sqcap \neg A_k)^I$ and $k_{\emme}(C_j) < k_{\emme}(C_k)$.
%%\end{itemize}
\end{definition}

In this semantics the global preference relation $<$ is related to  the various preference relations $<_{A_i}$ relative to single aspects $A_i$. Given (a)  $x < y$ when $x$ is preferred to $y$ for a single aspect $A_i$, and there is no aspect $A_j$ for which $y$ is preferred to $x$. (b) captures the idea that in case two individuals are preferred with respect to different aspects, preference (for the global preference relation) is given to the individual that satisfies all typical properties of the {\bf most specific} concept (if $C_k$ is more specific than $C_j$, then  $k_{\emme}(C_j) < k_{\emme}(C_k)$), as illustrated by Example \ref{example-global-relation} below. 

We insist in highlighting that this semantics  somewhat complicated is needed since we want to provide a strengthening of rational closure. For this, we have to respect the constraints imposed by rational closure. One might think in the future to study a semantics in which only (a) holds.%(a) and (b) can be substituted by the simpler $x < y$ iff 
%there is $A_i$ such that $x <_{A_i} y$, while there is no $A_j$ such that $y <_{A_j} x$. 
We have not considered such a simpler semantics since it  would no longer be a strengthening of the semantics corresponding to rational closure, and is therefore out of the focus of this work.

In order to be a model of $K$ an $\alctre$ model must satisfy the following constraints.
\begin{definition}[Enriched rational models of K]\label{enriched-model-of-K}
Given a knowledge base K, and an enriched rational model for $K$ $\emme = \langle \Delta, <, <_{A_1}, \dots ,<_{A_n}, I \rangle$,
$\emme$ is a model of $K$ if it satisfies both its TBox
and its ABox, where $\emme$  satisfies TBox if for all  inclusions $C \sqsubseteq A_i \in TBox$: 
if $\tip$ does not occur in $C$, then $C^I \subseteq {A_i}^I$
 if $\tip$ occurs in $C$, and $C$  is $\tip(C')$, then both
(i)  $min_<({C'}^I) \subseteq {A_i}^I$ and (ii) $min_{<_{A_i}}({C'}^I) \subseteq {A_i}^I$. \\
$\emme$ satisfies ABox  if
(i) for all $C(a)$  in ABox,  $a^I \in C^I$, (ii) for all $aRb$ in
ABox,  $(a^I,b^I) \in R^I$
\end{definition}
%\color{blue} [[ TAGLIEREI: hence a new corresponding syntactic characterization would need to be studied, but it might be worth studying. 
%(ANCHE la versione attuale richiede una nuova  syntactic characterization)]] \normalcolor

%Similar considerations hold for the condition of modularity: we assume here that the preference relations are modular because we want to study a strengthening of the semantic counterpart of rational closure that builds over rational preference relations satisfying modularity. It can be interesting nonetheless to study generalizations of the semantics in which modularity does not hold. %[QUESTI COMMENTI VANNO SPOSTATI DA QUI].

\begin{example}\label{example-global-relation}
Let 
%\begin{quote}
 {\small $K = \{Penguin \sqsubseteq Bird,  \tip(Bird) \sqsubseteq HasNiceFeather$, $\tip(Bird) \sqsubseteq Fly$,  $\tip(Penguin) \sqsubseteq \neg Fly \}$.
%\end{quote}
$\elle_A = \{HasNiceFeather, Fly,  \neg Fly, Bird, Penguin\}$}.  
%NB se decidiamo di ampliare L_A, bisogna adattare l'esempio
We consider an $\alctre$ model $\emme$ of $K$, that we don't fully describe but which we only use to observe the behavior of two Penguins 
$x$, $y$ with respect to the properties of (not) flying and having nice feather.
%(by $\elle_A $ $\emme$ will contain only three preference relations: \normalcolor $<, <_{Fly}, <_{HasNiceFeather}$). 
 In particular, let us consider the three preference relations: $<, <_{\neg Fly}, <_{HasNiceFeather}$. %(we can assume that $<_{Bird}=<_{Penguin}=\emptyset$). 

Suppose  $x <_{\neg Fly} y$ (because $x$, as all typical penguins, does not fly whereas $y$ exceptionally does)  and there is no other aspect $A_i$ such that $y <_{A_i} x$, and in particular it does not hold that  $y <_{HasNiceFeather} x$ (because for instance  both have a nice  feather). In this case, obviously it holds that $x < y$ (since (a) is satisfied). 

Consider now a more tricky situation in which again $x <_{\neg Fly} y$ holds (because for instance  $x$ does not fly whereas $y$ flies), 
%$x \in min_{Fly}(Penguin)^I$ 
($x$ is a typical penguin for what concerns Flying) but  this time $y <_{HasNiceFeather} x$ holds (because for instance $y$  has a nice feather, whereas $x$ has not).
%, and that $y \in min_{HasNiceFeather}(Penguin)^I$, and also $y \in min_{HasNiceFeather}(Bird)^I$: $y$ is a typical penguin and a typical bird, when focusing on the aspect of being HasNiceFeather (whereas $x$ is not). 
So $x$ is preferred to $y$ for a given aspect whereas $y$ is preferred to $x$ for another aspect. %However the situation is not symmetrical: for each concept of which $y$ is a typical instance (w.r.t. being HasNiceFeather), i.e. 
However, $x$ enjoys the typical properties of penguins, and violates the typical properties of birds, whereas $y$ enjoys the typical properties of birds and violates those of penguins. Being concept Penguin more specific than concept Bird, we prefer $x$ to $y$, since we  prefer the individuals that inherit the properties of the most specific concepts of which they are instances.
%\color{blue}[[POCO CHIARO, TAGLIEREI: In the model, $y$ is a typical instance (as far as HasNiceFeather is considered), of Penguin, Bird, and $\top$ where clearly $\top$ is the largest concept [ATTENZIONE A CONSISTENZA SU QS]. ]] \normalcolor
%And $x$ is a typical instance of a more specific concept: Penguin. 
This is exactly what we get: by (b)   $x < y$ holds.
\end{example}
Logical entailment for $\alctre$  is defined as usual: a query (with form $C_L(a)$ or $C_L \sqsubseteq C_R$) is logically entailed by $K$ if it holds in all models of $K$, as stated by the following definition.
%
%\begin{definition}[Logical entailment]\label{def:entailment}
% 
%Given an enriched model model $\emme = \langle \Delta, <, <_{A_1}, \ldots, <_{A_n} I \rangle$, a query $F=C_L(a)$ holds in $\emme$ if $a^I \in C_L^I$, whereas a query $F=C_L \sqsubseteq C_R$ holds in $\emme$ if $C_L^I \subseteq C_R^I$. 
% \end{definition}
% 
The following theorem shows the relations between $\alctre$ and $\alctr$. Proofs are omitted due to space limitations.
%\color{green}
 \begin{theorem}\label{equivalence-multiple}
If $K \models_{\alctr} F$ then also $K \models_{\alctre} F$. If $\tip$ does not occur in $F$ the other direction also holds: If $K \models_{\alctre} F$ then also $K \models_{\alctr} F$.
 \end{theorem}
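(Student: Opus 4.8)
The plan is to prove the two implications separately; the first is a short reduct argument, the second needs a model transformation. For $K \models_{\alctr} F \Rightarrow K \models_{\alctre} F$ I would show that every enriched rational model of $K$ restricts to an $\alctr$ model of $K$ on the same domain and interpretation function. Given an enriched rational model $\emme = \langle \Delta, <, <_{A_1}, \dots, <_{A_n}, I \rangle$ of $K$ (Definition~\ref{enriched-model-of-K}), consider $\emme^- = \langle \Delta, <, I \rangle$: by Definition~\ref{def-enrichedmodelR} the relation $<$ is irreflexive, transitive, modular and has the finite chain condition, and $(\tip(C))^I = min_<(C^I)$, so $\emme^-$ is a structure as in Definition~\ref{semalctr}; moreover the TBox requirement of Definition~\ref{enriched-model-of-K} subsumes the $\alctr$ TBox-satisfaction condition (as its $\tip$-free clause together with its clause~(i)), and the ABox clauses coincide, so $\emme^-$ is an $\alctr$ model of $K$. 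Since the truth value of any query $F$ --- even one containing $\tip$ --- depends only on $\Delta$, $<$ and $I$, $F$ holds in $\emme$ iff it holds in $\emme^-$; hence $K \models_{\alctr} F$ gives $F$ in $\emme^-$, hence in $\emme$, and as $\emme$ was arbitrary, $K \models_{\alctre} F$.

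For the converse, assume $\tip$ does not occur in $F$, so that the truth value of $F$ in a structure depends only on its domain and interpretation function; it then suffices to show that every $\alctr$ model $\emme = \langle \Delta, <, I \rangle$ of $K$ can be turned into an enriched rational model $\emme^+$ of $K$ on the same $\Delta$ and $I$ (with a possibly different preference relation), for then $K \models_{\alctre} F$ forces $F$ in $\emme^+$, hence in $\emme$, hence $K \models_{\alctr} F$. To build $\emme^+$ I would keep $\Delta$ and $I$, put $<^+_{A_i} := {<^+}$ for every aspect $A_i$ --- this makes clause~(a) of Definition~\ref{def-enrichedmodelR} trivial ($x <^+_{A_i} y$ already yields $x <^+ y$) and collapses clause~(ii) of Definition~\ref{enriched-model-of-K} onto clause~(i) --- and define $<^+$ from a rank function $k^+ : \Delta \to \mathbb{N}$ by $x <^+ y$ iff $k^+(x) < k^+(y)$, which is automatically irreflexive, transitive, modular and satisfies the finite chain condition. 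I set $k^+(x) := 0$ when $x$ violates no $\tip$-inclusion of $K$, and $k^+(x) := 1 + \max\{\rf(C) : \tip(C) \sqsubseteq A \in K \text{ and } x \in (C \sqcap \neg A)^I\}$ otherwise (antecedents $C$ with $\rf(C) = \infty$ have $C^I = \emptyset$ in every model of $K$ --- a consequence of the relation between model ranks and rational-closure ranks in \cite{AIJ15} --- so the $\max$ ranges over finite values), re-indexing if needed so that the image of $k^+$ is an initial segment of $\mathbb{N}$, so that $k^+$ is the rank function $k_{\emme^+}$ of $<^+$.

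The verification rests on a bridge lemma: writing $k^+(C) := min\{k^+(z) : z \in C^I\}$ (so $k^+(C) = k_{\emme^+}(C)$), one has $k^+(C) = \rf(C)$ for every concept $C$ occurring in $K$, to be proved by induction on $\rf(C)$ using that $\emme \models K$ (hence $min_<(C^I) \subseteq A^I$ for each $\tip(C) \sqsubseteq A \in K$) and the exceptionality layering $E_0 \supseteq E_1 \supseteq \dots$ of Definition~\ref{definition_exceptionality}. Granting it: the first TBox clause of Definition~\ref{enriched-model-of-K} holds because any $x \in C^I$ with $x \notin A^I$ violates $\tip(C) \sqsubseteq A$, so $k^+(x) \geq 1 + \rf(C) > \rf(C) = k^+(C)$, and hence the $k^+$-minimal elements of $C^I$ lie in $A^I$; the second TBox clause follows since $<^+_{A_i} = {<^+}$; the ABox $\tip$-assertions are handled likewise, their witnesses being $<$-minimal in the relevant concept in $\emme$. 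Clause~(b) of Definition~\ref{def-enrichedmodelR} holds because its premise, applied to $x$ and the inclusion $\tip(C_j) \sqsubseteq A_j$ of largest $\rf(C_j)$ violated by $x$, yields a $\tip(C_k) \sqsubseteq A_k$ violated by $y$ with $k^+(C_j) < k^+(C_k)$, i.e.\ $\rf(C_j) < \rf(C_k)$, whence $k^+(x) = 1 + \rf(C_j) \leq \rf(C_k) < 1 + \rf(C_k) \leq k^+(y)$; and $k^+(x) = 0 < k^+(y)$ when $x$ violates nothing --- in either case $x <^+ y$.

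I expect the bridge lemma $k^+(C) = \rf(C)$ to be the main obstacle. Defining $k^+$ self-referentially --- with $min\{k^+(z) : z \in C^I\}$ in place of $\rf(C)$ --- would be circular, since $x \in C^I$ whenever $x$ violates $\tip(C) \sqsubseteq A$; using $\rf$ directly avoids the circularity, but then the equality must be earned, and its ``$\leq$'' half --- exhibiting, for each $C$, an element of $C^I$ of small enough $k^+$ --- is the delicate point, as $\emme$ need not be canonical. I would prove it by the layered truncation argument used in \cite{AIJ15} to compare ranks in $\alctr$ models of $K$ with rational-closure ranks. A last routine check is that $k^+$, after re-indexing, really is $k_{\emme^+}$, so that clause~(b), stated via $k_{\emme^+}$, can be read off from $k^+$ as above.
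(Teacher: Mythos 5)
The paper itself omits the proof of this theorem (``Proofs are omitted due to space limitations''), so your attempt can only be judged on its own terms. Your first direction is sound: the reduct $\langle \Delta, <, I\rangle$ of an enriched model is an $\alctr$ model of $K$, because clause (i) of the TBox condition in Definition~\ref{enriched-model-of-K} is exactly the $\alctr$ condition and $(\tip(C))^I$ is computed from $<$ alone in both semantics.

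The second direction has a genuine gap, and it sits exactly where you flagged it: the bridge lemma $k^+(C) = \rf(C)$ is not merely delicate, it is \emph{false} for arbitrary (non-minimal, non-canonical) $\alctr$ models, and with it your construction fails to yield a model of $K$. Take TBox $= \{\tip(B) \sqsubseteq A,\ \tip(D) \sqsubseteq E\}$ with $A,B,D,E$ atomic, so $\rf(B)=\rf(D)=0$, and the $\alctr$ model $\emme$ with $\Delta=\{w,x,y\}$, $w<x<y$, $B^I=\{x,y\}$, $A^I=\{x\}$, $D^I=\{w,x\}$, $E^I=\{w\}$. Then $min_<(B^I)=\{x\}\subseteq A^I$ and $min_<(D^I)=\{w\}\subseteq E^I$, so $\emme\models K$; but your rank gives $k^+(w)=0$ and $k^+(x)=1+\rf(D)=1=1+\rf(B)=k^+(y)$, hence $k^+(B)=1\neq 0=\rf(B)$, and worse, $x$ and $y$ are both $<^+$-minimal in $B^I$ while $y\notin A^I$, so $\emme^+$ falsifies $\tip(B)\sqsubseteq A$ (it also violates clause (b) of Definition~\ref{def-enrichedmodelR}, which here forces $x<^+y$ because $k_{\emme^+}(D)=0<1=k_{\emme^+}(B)$). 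The root cause is that in a non-canonical model the cheapest available witness of $B$ may be forced to violate some unrelated default, so no element of $B^I$ attains rank $\rf(B)$; the layered argument of \cite{AIJ15} that you invoke only equates model ranks with rational-closure ranks in \emph{minimal canonical} models. Note also that the obvious repair --- keep the original $<$ of $\emme$ and set every $<_{A_i}:=\,<$ --- does work on this particular example but fails on others: with TBox $=\{\tip(B)\sqsubseteq A\}$, $B^I=\{u,v\}$, $A^I=\{u\}$ and a third element $t\notin B^I$ placed at the same rank as $v$, clause (b) forces $t<v$, which the original relation does not satisfy. So the preference relations of the target enriched model must be recomputed intrinsically from $(\Delta,I)$ (e.g.\ by a stratification over the violated inclusions, per aspect and then globally), not imported either from $\rf$ or from the given $<$; as written, your proof of the converse does not go through.
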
 
 The following example shows that $\alctre$ alone is not strong enough, and this motivates the minimal models' mechanism that we introduce in the next section. In the example we show that $\alctre$ alone does not allow us to perform the stronger inferences with respect to rational closure mentioned in the Introduction (and in particular, it does not allow to infer (**), that typical penguins have a nice feather). %The weakness of $\alctre$ is also illustrated by Theorem \ref{equivalence-multiple} below showing that $\alctre$ is equivalent to $\alctr$ (which is only one of the building blocks of the semantics of rational closure). HasNiceFeather

\begin{example}\label{esempio-alctre}
Consider %any canonical model of 
the above Example \ref{example-global-relation}.
%classic penguins and birds example, in which we want to reason about two typical properties of birds, namely flying and being PENNUTO. 
%So consider $K =   \{\tip(Bird) \sqsubseteq Fly, \tip(Penguin) \sqsubseteq \neg Fly, \tip(Bird) \sqsubseteq PENNUTO\}$. 
As said in the Introduction, in rational closure we are not able to reason separately about 
the property of flying or not flying, and the property of having or not having a nice feather. Since penguins are exceptional birds  with respect to the property of flying, in rational closure which is an all-or-nothing mechanism, they do not inherit any of the properties of typical birds. In particular, they do not inherit the property of having a nice feather, even if this property and the fact of flying are  independent from each other and there is no reason why being exceptional with respect to one property should block the inheritance of the other one. Does our enriched semantics enforce the separate inheritance of independent properties? %and in particular does it allow to infer that jack is PENNUTO?. 

Consider a model $\emme$ in which we have $\Delta = \{x, y, z\}$, where $x$ is a bird (not a penguin) that flies and has a nice feather ($x \in Bird^I, x \in Fly^I, x \in HasNiceFeather^I, x \not \in Penguin^I$), $y$ is a penguin that does not fly and has a nice feather ($y \in Penguin^I, y \in Bird^I, y \not \in Fly^I, y \in HasNiceFeather^I$), $z$ is a penguin that does not fly and has no nice feather  ($z \in Penguin^I, z \in Bird^I, z \not \in Fly^I, z \not \in HasNiceFeather^I$). %Since $\elle_A $ $\emme$ will contain only three preference relations: $<, <_{Fly}, <_{HasNiceFeather}$. 
Suppose it holds that
$x <_{Fly} y$, $x <_{Fly} z$,
$x <_{HasNiceFeather} z$, $y <_{HasNiceFeather} z$, and  $x < y$, $x < z$, $y < z$. % [SE METTIAMO SOLO PROPRIETA A DESTRA DI INCLUSIONI QS VA BENE, ALTRIMENTI BISOGNA AGGIUNGERE ALTRE RELAZIONI, FORSE]. 
It can be verified that this is an $\alctre$ model, satisfying $\tip(Penguin) \sqsubseteq HasNiceFeather$ (since the only typical Penguin is $y$, instance of HasNiceFeather). 

Unfortunately, this is not the only $\alctre$ model of $K$. For instance there can be $\emme'$ equal to $\emme$ except from the fact that $y <_{HasNiceFeather} z$ 
does not hold, nor $y < z$ holds. It can be easily verified that this is also an $\alctre$ model of $K$  in which $\tip(Penguin) \sqsubseteq HasNiceFeather$ does not hold (since now also $z$ is a typical Penguin, and $z$ is not an instance of HasNiceFeather).
\end{example}

This example shows that although there are $\alctre$ models satisfying well suited  inclusions, the logic is not strong enough to limit our attention to these models. We would like to constrain our logic in order to exclude models like $\emme'$. Roughly speaking, we want to eliminate $\emme'$ because it is not minimal: although the model as it is satisfies $K$, so $y$ does not {\em need} to be preferred to $z$ to satisfy $K$ (neither with respect to $<$ nor with respect to $<_{HasNiceFeather}$), intuitively we would like to prefer $y$ to $z$ (with respect to the property HasNiceFeather, whence in this case with respect to the global $<$), since $y$ does not falsify any of the inclusions with HasNiceFeather, whereas z does. This is obtained by imposing the constraint of considering only models minimal with respect to all relations $<_A$, defined as in Definition \ref{minimal_enriched_models}  below. Notice that the wanted inference does not hold in $\alctr$ minimal canonical models corresponding to rational closure: in these models $y < z$ does never hold (the two elements have the same rank) and this semantics does not allow us to prefer $y$ to $z$. 
By adopting the restriction to minimal canonical models, 
%Since we want to find a semantics which is stronger than the one corresponding to rational closure, and since this is based on canonical models, we also adopt this restriction. In this way
we obtain a semantics which is stronger than rational closure (and therefore enforces all conclusions enforced by rational closure) and, furthermore, separately allows to reason on different aspects.

Before we end the section, similarly to what done above, let us introduce a rank of a domain element with respect to an aspect. We will use this notion in the following section.
\begin{definition}\label{definition_height}
The rank $k_{{A_i}_\emme} (x)$  of a domain element $x$ with respect to $<_{A_i}$ in $\emme$ is the
length of the longest chain $x_0 <_{A_i} \dots <_{A_i} x$ from $x$
to a minimal $x_0$ (s.t. for no ${x'}$  ${x'} <_{A_i} x_0$). 
To refer to the rank of an element $x$ with respect to the preference relation $<$ we will simply write 
$k_\emme (x)$. 
%When the preference relation is $<$ we will simply write 
%$k_\emme $  of a domain element $x$ with respect to $<_{A_i}$ 
\end{definition}%[QUESTA DEFINIZIONE VA BENE ANCHE PER P (NO MODULAR)--O E' MEGLIO RIFORMULARE TUTTO CON R--GUARDA SUBMITTED_TO_AIJ E ANCHE ?]
The notion just introduced will  be useful in the following. Since   $k_{{A_i}_\emme} $ and $<_{A_i}$ are clearly interdefinable (by the previous definition and by the properties of $<_{A_i}$ it easily follows that in all enriched models $\emme$, $x <_{A_i} y$ iff $k_{{A_i}_\emme}(x) <  k_{{A_i}_\emme}(y)$, and $x < y$ iff $k_{\emme}(x) <  k_{\emme}(y)$ ), we will shift from one to other whenever this simplifies the exposition. 

\section{Nonmonotonicity and relation with rational closure}\label{section-relations}

We here define a minimal models mechanism starting from the enriched models of the previous section. With respect to the minimal canonical models used in \cite{AIJ15} we define minimal models by separately minimizing all the preference relations with respect to all aspects (steps (i) and (ii) in the definition below), before minimizing $<$ (steps (iii) and (iv) in the definition below). % [QUESTO E' VERO?? COME SI GIUSTIFICA???].
%[[MI PARE CHE SE NON SI MINIMIZZA ANCHE RISPETTO A $<$SIA DIFFICILE OTTENERE I MODELLI COME CON RATIONAL CLOSURE
%PERCHE' LA DEF 12 (viii) E' SOLO UN IF]] \normalcolor
By the constraints linking $<$ to the preference relations $<_{A_1} \dots <_{A_n}$, this leads to preferring (with respect to the global $<$) the individuals that are minimal with respect to all $<_{A_i}$ for all aspects $A_i$, or to aspects of most specific categories than of more general ones. It turns out that this leads to a stronger semantics than what is obtained by directly minimizing $<$.
%(i) Given the preferences w.r.t. single aspects, we combine them in order to obtain a global preference relation $<$ that we prove to be a strengthening of RC.
%(ii) We prove also that reasoning with this global preference relation is stronger than reasoning with the single aspects (i.e. for instance a $\tip(C ) \sqsubseteq F$ can hold in the obtained $M$ with $<$ global,
%whereas it does not hold when considering the single aspect $F$. This amounts to saying that it might occur that $min_<(C ) \sqsubseteq [[F]] $ without $min_{<_F}(C ) \sqsubseteq [[F]] $.
%(iii) Last we show the correspondence between the obtained semantics and a syntactic procedure which is a strengthening of the procedure used to build rational closure.
%
%We then define minimal models as models satisfying Definitions \ref{def-enrichedmodelKB}, \ref{def-semalctr} and \ref{def-modelKB} above, in which every chain of relations $<_{A_i}$ is as short as possible, and in which  $<$ 
%%satisfies the following constraint that make it depend on the various $<_{A_i}$:
%is redefined in terms of the  $<_{A_i}$s, the intuitive idea being that an individual $x$ is preferred to $y$ with respect to the global preference if it is preferred with respect to all aspects. As a consequence, the minimal instances of a concept $C$ are those that are minimal with respect to all aspects $A_i$.
%[ALTERNATIVA 1]
\begin{definition}[Minimal Enriched Models]\label{minimal_enriched_models} 
Given two $\alctre$ enriched models $\emme = \langle \Delta, <_{A_1}, \dots ,<_{A_n}, <, I  \rangle$ and $\emme' =
\langle \Delta',  <'_{A_1}, \dots ,<'_{A_n}, <', I' \rangle$ we say that $\emme'$ is preferred to
$\emme$ with respect to the single aspects (and write $\emme' <_{Enriched_Aspects} \emme$) if $\Delta = \Delta'$, $I =
I'$, and:
\begin{itemize}
\item  (i) for all $x \in \Delta$, for all $A_i$: $ k_{{A_i}_{\emme'}}(x) \leq k_{{A_i}_{\emme}}(x)$;
\item (ii) for some $y \in \Delta$, for some $A_j$, $ k_{{A_j}_{\emme'}}(y) < k_{{A_j}_{\emme}}(y)$  
\end{itemize} 
We let the set $Min_{Aspects} = \{\emme:$ there is no $\emme'$ such that $\emme' <_{Enriched_Aspects} \emme\}$.\\
Given $\emme$ and $\emme' \in Min_{Aspects}$, we 
 say that  $\emme'$ is overall preferred to $\emme$ (and write $\emme' <_{Enriched} \emme$)
 if$\Delta = \Delta'$, $I =
I'$, and:
 \begin{itemize}
\item  (iii) for all $x \in \Delta$, $ k_{{\emme'}}(x) \leq k_{{\emme}}(x)$;
\item (iv) for some $y \in \Delta$ ,  $ k_{{\emme'}}(y) < k_{{\emme}}(y)$ 
\end{itemize}  

We call $\emme$ a {\em minimal enriched model} of $K$ if it is a model of $K$ and there is no $\emme'$ model of $K$ such that  {\small $ \emme' <_{Enriched} \emme$}.
 \end{definition}
$K$ minimally entails a query $F$  if $F$ holds in all minimal $\alctre$ %(or $\alctre$) 
models of  $K$. We write {\small $K \models_{{\alctre}_{\mathit{min}}} F$}.
%\end{definition}
%
%\begin{definition}[Minimal entailment in $\alctre$] 
% \end{definition}
%
%
%[DEF ALCTPE_MIN]
%
%
%[[ATTENZIONE:PRIMA DEF MINIMALE E POI CANONICO?]]
%
%[DISCUTI APPROPRIATEZZA SEMANTICA]
We have developed the semantics above in order to overcome a weakness of rational closure, namely its all-or-nothing character. 
In order to show that the semantics hits the point, we show here that the semantics is stronger than the one corresponding to rational closure. 
Furthermore, Example \ref{esempio-strengthening of rational closure} below shows that indeed we have strengthened rational closure 
by making it possible to separately reason on the different properties.  Since the semantic characterization of rational closure is given in terms of rational {\em canonical} models, 
here we restrict our attention to enriched rational models which are {\em canonical}. 
\begin{definition}[Minimal canonical enriched models of K]\label{Def-minimal-enriched-canonical}
An $\alctre$ enriched model $\emme$ is a minimal canonical enriched model of $K$ 
if it satisfies $K$, it is  minimal (with respect to Definition \ref{minimal_enriched_models})
and it is canonical: for all the sets of concepts $\{C_1, C_2, \dots,$ $ C_n\}$\hide{ \subseteq \lingconc$} s.t. $K \not\models_{\alctre} C_1 \sqcap C_2 \sqcap \dots \sqcap C_n \sqsubseteq \bot$, there exists (at least) a domain element $x$ such that
%$x \in C^I$ for each combination $C$ in $\mathcal{S}$ consistent with $K$.
$x \in (C_1 \sqcap C_2 \sqcap \dots \sqcap C_n)^I$. 
\end{definition}
We call $\alctre + {{min-canonical}}$ the semantics obtained by restricting attention to minimal canonical enriched models. In the following we will write:\\ $K \models_{\alctre + {{min-canonical}}} C \sqsubseteq D$ to mean that $C \sqsubseteq D$ holds in all minimal canonical enriched models of $K$.
The following example shows that this semantics allows us to correctly deal with the wanted inferences of the Introduction, as (**). The fact that the semantics $\alctre + {{min-canonical}}$ is a genuine strengthening of the semantics corresponding to rational closure is formally shown in Theorem \ref{stronger_semantics} below.
\begin{example}\label{esempio-strengthening of rational closure}
Consider any minimal canonical model $\emme^*$ of the same $K$ used in Example \ref{example-global-relation}.% squid = \{Penguin \sqsubseteq Bird,\\ \tip(Bird) \sqsubseteq HasNiceFeather, \tip(Bird) \sqsubseteq Fly,  \tip(Penguin) \sqsubseteq \neg Fly \}$.  
%In $\emme^*$  we will consider the three preference relations: $<, <_{Fly}, <_{HasNiceFeather}$ and assume $<_{Bird}= <_{Penguin}= \emptyset$.  
It can be easily verified that in $\emme^*$ there is a domain element $y$ which is a penguin that does not fly and has a nice feather ($y \in Penguin^I, y \in Bird^I, y \in HasNiceFeather^I$).
First, it can be verified that $y \in min_<(Penguin^I)$ (by Definition \ref{def-enrichedmodelR},  and since by minimality of $< _{Fly}$ and $< _{HasNiceFeather}$, $y \in min_{<_{Fly}}(Penguin^I)$ and $y \in min_{<_{HasNiceFeather}}(Penguin^I)$). Furthermore,  for all penguin $z$ that has not a nice feather, $y < z$ (again by Definition \ref{def-enrichedmodelR},  and since by minimality of $< _{Fly}$ and $< _{HasNiceFeather}$, $y <_{HasNiceFeather} z$).  From this, in all minimal canonical $\alctre$ models of $K$ it holds that $\tip(Penguin) \sqsubseteq HasNiceFeather$, i.e., $K \models_{\alctre + {{min-canonical}}}  \tip(Penguin) \sqsubseteq HasNiceFeather$, which was the wanted inference (**) of the Introduction.
\end{example}

%----------------------------
%\item[(vi)] $x < y$ if
%\subitem [(a)] there is $A_i$ such that $x <_{A_i} y$, and 
%\subitem[(b)] for all $A_j$ s.t. $y <_{A_j} x$, if there is a $C$ s.t. $x, y \in C^I$ and $y \in min_{A_j}(C^I)$, there is also a 
% $C'$ s.t. $K \models_{\alc} C' \sqsubseteq C$ ($C'$ is more specific than $C$),  $x, y \in {C'}^I$ and $x \in min_{A_i} ({C'})$
%\subitem[(b')] there is a $C'$ such that $x, y \in {C'}^I$, $x \in min_{A_i} ({C'})$, and for all $A_j$ s.t. $y <_{A_j} x$, for all $C$ s.t. $x, y \in C^I$ and $y \in min_{A_j}(C^I)$, $K \models_{\alc} C' \sqsubseteq C$ ($C'$ is more specific than $C$),  $x, y \in {C'}^I$ and $x \in min_{A_i} ({C'})$
%\subitem[(b)] for all $A_j$ s.t. $y <_{A_j} x$, consider $C$ the largest concept s.t. $x, y \in C^I$ and $y \in min_{A_j}(C^I)$\footnote{by largest concept we mean that there is no $C''$ s.t.  $K \models_{\alc} C \sqsubseteq C''$ and $x, y \in {C''}^I$ and $y \in min_{A_j}({C''}^I)$  IN QUALE LINGUAGGIO??}, and $C'$ the largest concept s.t. $x, y \in C'^I$ and $x \in min_{A_i}(C'^I)$. Then $K \models_{\alc} C' \sqsubseteq C$ ($C'$ is more specific than $C$)

%there is also a 
% $C'$ and $A_k$ s.t. $x <_{A_k} y$ and $K \models_{\alc} C' \sqsubseteq C$ ($C'$ is more specific than $C$),  $x, y \in {C'}^I$ and $x \in min_{A_k} ({C'})$. 

%\end{itemize}

{\bf The following theorem is the important technical result of the paper:} %:  $\alctre+{min-canonical}$ is a strengthening of rational closure}. %The proof is omitted due to space limitations.

\begin{theorem}\label{stronger_semantics}{\bf The minimal models semantics $\alctre + {min-canonical}$ is stronger than the semantics for rational closure.
Let ($K=TBox, ABox$). If $C \sqsubseteq D \in \overline{TBox}$ then $K {\small \models_{\alctre + {{min-canonical}}}} C \sqsubseteq D$.}
 \end{theorem}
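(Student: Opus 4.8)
The goal is to show that whenever $C \sqsubseteq D$ is in the rational closure $\overline{TBox}$, then $C \sqsubseteq D$ holds in every minimal canonical enriched model of $K$. By the splitting of $\overline{TBox}$ in Definition \ref{def:rational closureDL}, there are two cases. If $C \sqsubseteq D$ is in the classical part, i.e. $K \models_{\alctr} C \sqsubseteq D$, then by Theorem \ref{equivalence-multiple} (first implication) $K \models_{\alctre} C \sqsubseteq D$, hence it holds in all enriched models of $K$ and a fortiori in the minimal canonical ones. So the real work concerns $\tip$-inclusions $\tip(C) \sqsubseteq D$ with $\rf(C) < \rf(C \sqcap \neg D)$ (the case $\rf(C) = \infty$ is handled by noting that $C$ is then unsatisfiable, so $\tip(C)^I = \emptyset$ in any model of $K$).

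The plan is to argue via ranks. First I would establish the key lemma that in any \emph{minimal canonical} enriched model $\emme$ of $K$, the rank $k_\emme(C_R)$ of any $\alc$-concept $C_R$ with respect to the global $<$ coincides with its rational-closure rank $\rf(C_R)$. The $\leq$ direction uses canonicity: since all KB-consistent combinations of concepts are realized, and since the enriched models are minimal (steps (i)--(iv) of Definition \ref{minimal_enriched_models} push every element as low as possible, first on each $<_{A_i}$, then on $<$), no element can have a rank strictly higher than forced by the exceptionality layering $E_0 \supseteq E_1 \supseteq \cdots$. The $\geq$ direction uses that $\emme$ is a model of $K$: the TBox inclusions force exceptional concepts up to at least their $\rf$-level, exactly as in the proof of Theorem \ref{Theorem_RC_TBox} in \cite{AIJ15}. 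The point to check carefully is that the extra minimization on the aspect relations $<_{A_i}$ and the constraints (a),(b) in Definition \ref{def-enrichedmodelR} do not \emph{lower} the global rank below $\rf$: clause (a) only adds $x < y$ when some aspect already separates them without contradiction, and clause (b) only shuffles elements that are already exceptional, so neither can make a concept that is exceptional for $E_i$ reach a minimal element of the relevant restriction. Once the rank identity is in hand, for a $\tip$-inclusion with $\rf(C) < \rf(C \sqcap \neg D)$ we get, in any minimal canonical enriched $\emme$, that $k_\emme(C) < k_\emme(C \sqcap \neg D)$, hence every $<$-minimal element of $C^I$ lies outside $(C \sqcap \neg D)^I$, i.e. $min_<(C^I) \subseteq D^I$, which is exactly $\tip(C) \sqsubseteq D$ holding in $\emme$. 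Finally I would note that such minimal canonical enriched models exist (otherwise the entailment is vacuous), appealing to the finite model property of $\alctr$ and the fact that one can always add the missing aspect relations and then minimize.

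The main obstacle I expect is the $\geq$ direction of the rank lemma, i.e. showing that the aspect-wise minimization (steps (i)--(ii)) performed \emph{before} the global minimization cannot collapse the global rank of an exceptional concept. Concretely, one must rule out that minimizing some $<_{A_i}$ creates, via clause (b) of Definition \ref{def-enrichedmodelR}, a situation in which an element that rational closure deems exceptional for $C$ becomes $<$-minimal in $C^I$. The resolution is that clause (b) is carefully calibrated to the \emph{ranks of the antecedent concepts} $k_\emme(C_j)$, which by the very construction track the $\rf$-levels, so (b) respects the layering by design; the semantics was ``somewhat complicated'' (as the authors put it) precisely to guarantee this compatibility. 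A secondary subtlety is that canonicity in Definition \ref{Def-minimal-enriched-canonical} is phrased via $K \not\models_{\alctre} C_1 \sqcap \cdots \sqcap C_n \sqsubseteq \bot$ rather than $\models_{\alctr}$; by the $\tip$-free direction of Theorem \ref{equivalence-multiple} these agree on $\bot$-entailments of $\alc$ concepts, so the canonical enriched models realize exactly the same combinations as the canonical $\alctr$ models of \cite{AIJ15}, which is what makes the comparison with rational closure go through.
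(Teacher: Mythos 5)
Your overall strategy (reduce the claim to the semantic characterization of rational closure and compare ranks across the two kinds of minimal canonical models) points in the right direction, and your treatment of the classical inclusions and of the case $\rf(C)=\infty$ is fine. However, the key lemma you build everything on --- that in every minimal canonical enriched model $\emme$ the global rank $k_{\emme}(C_R)$ of every $\alc$-concept coincides with its rational-closure rank $\rf(C_R)$ --- is false, and the direction that fails is precisely the ``$\leq$'' direction you dismiss as following from canonicity and minimality. The paper's own Example \ref{esempio-strengthening of rational closure} refutes it: for the $K$ of Example \ref{example-global-relation} one has $\rf(Penguin\sqcap\neg HasNiceFeather)=1=\rf(Penguin)$, yet in every minimal canonical enriched model the constraints of Definition \ref{def-enrichedmodelR} \emph{force} every penguin without nice feather to sit strictly above some penguin with nice feather (it is pushed up in $<_{HasNiceFeather}$ by the inclusion $\tip(Bird)\sqsubseteq HasNiceFeather$, and clause (a) then forces the same in the global $<$), so $k_{\emme}(Penguin\sqcap\neg HasNiceFeather)=2>1$. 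The minimization steps (i)--(iv) cannot push such an element back down, because clause (a) is a hard constraint on the model class, not something minimization can undo. Indeed, if your rank identity held, then $\rf(C)<\rf(C\sqcap\neg D)$ would be \emph{equivalent} to $\tip(C)\sqsubseteq D$ holding in all minimal canonical enriched models, i.e.\ the new semantics would collapse onto rational closure --- contradicting the strengthening (inference (**)) that is the whole point of the paper. So only one inequality, $k_{\emme}(C_R)\geq\rf(C_R)$, can survive, and an inequality in that direction is not enough to run your final step ``$\rf(C)<\rf(C\sqcap\neg D)$ implies $k_{\emme}(C)<k_{\emme}(C\sqcap\neg D)$''.

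The paper's proof works with exactly the surviving one-directional statement, packaged so that it suffices. Arguing by contraposition, it takes a minimal canonical enriched model $\emme$ falsifying $\tip(C')\sqsubseteq D$ at some $y$, pairs it with a canonical $\alctr$ model $\emme_{RC}$ realizing the same $K$-consistent concept sets (Theorem \ref{equivalence-multiple} is used only to identify consistency in the two logics), and proves by induction on $k_{\emme_{RC}}(x)$ that $x<_{RC}y$ implies $x<y$ in $\emme$; the induction step locates a violated inclusion $\tip(B_i)\sqsubseteq A_i\in E_i-E_{i+1}$ and verifies that clause (b) of Definition \ref{def-enrichedmodelR} fires by comparing $k_{\emme}(B_l)$ with $k_{\emme}(B_i)$ for every inclusion violated by $x$. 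The resulting inclusion of orders yields $min_<({C'}^I)\subseteq min_{<_{RC}}({C'}^I)$, so the counterexample transfers to $\emme_{RC}$ and Theorem \ref{Theorem_RC_TBox} gives $C\sqsubseteq D\notin\overline{TBox}$. If you prefer to keep a direct (non-contrapositive) formulation, the lemma to aim for is this order inclusion (equivalently: every $<$-minimal element of ${C}^I$ is $<_{RC}$-minimal), not a rank identity; note also that the obstacle you flagged as the hard part is the true and provable direction, while the step you treated as routine is the one that breaks.
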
 
 \begin{proof}{\bf (Sketch)}
% First recall that by Theorem \ref{Theorem_RC_TBox} $C \sqsubseteq D \in \overline{TBox}$ if $C \sqsubseteq D$ holds in all minimal $\alctr$ canonical models of $K$. In the following we reason by contraposition by showing that 
% if $K \not\models_{\alctre + {{min-canonical}}} C \sqsubseteq D$ then there is a minimal $\alctr$ canonical model of $K$ in which $C \sqsubseteq D$ does not hold, which by what just recalled (Theorem \ref{Theorem_RC_TBox}) entails that $C \sqsubseteq D \not\in \overline{TBox}$
%
%
By contraposition suppose that $K \not\models_{\alctre + {{min-canonical}}} C \sqsubseteq D$. 
Then there is a minimal canonical enriched $\alctre$ model  $\emme =  \langle \Delta, <_{A_1}, \dots ,<_{A_n}, <, I  \rangle$  of $K$ and an $y \in C^I$ such that $y \not\in D^I$. 
All consistent sets of concepts consistent with $K$ w.r.t. $\alctre$ are also consistent with $K$ with respect to $\alctr$, and viceversa (by Theorem \ref{equivalence-multiple}).% if $K \not\models_{\alctre} C_1 \sqcap C_2 \sqcap \dots \sqcap C_n \sqsubseteq \perp$, then also $K \not\models_{\alctr} C_1 \sqcap C_2 \sqcap \dots \sqcap C_n \sqsubseteq \perp$ ). 
By definition of {\em canonical}, there is also a canonical $\alctr$ model of $K$ $\emme_{RC} =  \langle {\Delta},  <_RC, I  \rangle$ be this model. If $C$ does not contain the $\tip$ operator, we are done: in $\emme_{RC}$, as in $\emme$, there is $y \in C^{I}$ such that $y \not\in D^{I}$, hence $C \sqsubseteq D$ does not hold in  $\emme_{RC}$, and $C \sqsubseteq D \not\in \overline{TBox}$.
If $\tip$ occurs in $C$, and $C = \tip(C')$, we still need to show that also in $\emme_{RC}$, as in $\emme$, 
$y \in min_{<_{RC}}({C'}^I)$.  We prove this by showing that for all $x,y \in \Delta$ if $x <_{RC} y$ in $\emme_{RC}$, then also 
$x <y$ in $\emme$. %or equivalently if $k_{\emme_{RC}}(x) < k_{\emme_{RC}}(y)$, then  $k_{\emme}(x) < k_{\emme}(y)$ in $\emme$. 
%
%First of all, notice that by definition of rank of a formula, of the sets $E_0 \dots E_n$ of Proposition \ref{definition_exceptionality} , together with Proposition  13 of \cite{AIJ15}, we have that if 
%$k_{\emme_{RC}}(x)= i$, then $x \models E_i$, whereas $x \not\models E_{i-1} - E_i$. By the same definitions and propositions, for all concepts $C$, if rank($C$)$=i$, then $k_{\emme_{RC}}(C)= i$.  We will use these facts in the following. 
%
The proof is by induction on  $k_{\emme_{RC}}(x)$.%, the rank  of $x$ in $\emme_{RC}$.
 
(a): let $k_{\emme_{RC}}(x)= 0$ and $k_{\emme_{RC}}(y)>0$. Since $x$ does not violate any inclusion, also in $\emme$ (by minimality of $\emme$) for all preference relations $<_{A_j}$  $k_{{A_j}_\emme}(x) = 0 $, and also  $k_{\emme}(x)= 0$. This cannot hold for $y$, for which $k_{\emme}(y)> 0$ (otherwise $\emme$ would violate %$\tip(B_i) \sqsubseteq A_i$, and 
$K$, against the hypothesis). Hence $x < y$ in $\emme$.

(b): let $k_{\emme_{RC}}(x)= i < k_{\emme_{RC}}(y)$, i.e. $x<_{RC} y$.
As $x <_{RC} y$ in $\emme_{RC}$ and the rank of $x$ in $\emme_{RC}$ is $i$,
there must be a $\tip(B_i) \sqsubseteq A_i \in E_i - E_{i+1}$ such that $x \in (\neg B_i \sqcup A_i)^I$ whereas $y \in (B_i \sqcap \neg A_i)^I$ in $\emme_{RC}$. Before we proceed let us notice that by definition of $E_i$, as well as by what stated just above on the relation between rank of a concept and $k_{\emme_{RC}}$, $k_{\emme_{RC}}(B_i)= k_{\emme_{RC}}(x)$ . We will use this fact below.
%
%We want to prove that condition (b) holds. %to conclude that $x<y$ (i.e. $k_{\emme}(x) < k_{\emme}(y)$).
%We already know that $y$ violates the inclusion $\tip(B_i) \sqsubseteq A_i \in E_i - E_{i+1}$. 
%
We show that, for any inclusion $\tip(B_l) \sqsubseteq A_l \in K$ that is violated by $x$, it holds that $k_{\emme}(B_l) < k_{\emme}(B_i)$,
so that, by (b), $x<y$.

Let $\tip(B_l) \sqsubseteq A_l \in K$ violated by $x$, i.e. such that  $x \in (B_l \sqcap \neg A_l)^I$.
Since $\emme_{RC}$ satisfies $K$,  there must be $x' <_{RC} x$ in $\emme_{RC}$ with $x' \in (B_l)^I$. 
As $k_{\emme_{RC}}(x')< i$, 
by inductive hypothesis,  $x' < x$ in $\emme$. 
As $x' \in {B_l}^I$, $k_{\emme}(B_l) \leq k_{\emme}(x')$. 
%Also, we can show that $k_{\emme}(x') < k_{\emme}(B_i) $ (see below).
Since it can be shown that $k_{\emme}(x') < k_{\emme}(B_i) $, $k_{\emme}(B_l) < k_{\emme}(B_i)$, and by condition (b), it holds that $x < y$ in $\emme$. 

%To conclude the proof, we must show that $k_{\emme}(x') < k_{\emme}(B_i) $.
%Indeed, as $k_{\emme_{RC}}(B_i)= k_{\emme_{RC}}(x)=i$, we have that
%for all $y \in {B_i}^I$, $k_{\emme_{RC}}(y) \geq i$. 
%Hence, $k_{\emme_{RC}}(x') < k_{\emme_{RC}}(y)$.
%Thus, by inductive hypothesis, $k_{\emme}(x') < k_{\emme}(y)$, for all $y \in {B_i}^I$,
%that is, $k_{\emme}(x') < k_{\emme}(B_i)$.
%
With these facts,  since $y \in min_<({C'}^I)$  holds in $\emme$, also $y \in min_{<_{RC}}({C'}^{I})$ in $\emme_{RC}$, hence $\tip(C') \sqsubseteq D$ does not hold in $\emme_{RC}$, and $C \sqsubseteq D = \tip(C') \sqsubseteq D \not\in \overline{TBox}$.

The theorem follows by contraposition.
\normalcolor
\end{proof}
%fine hide

 \hide{\begin{proof}
 First recall that by Theorem \ref{Theorem_RC_TBox} $C \sqsubseteq D \in \overline{TBox}$ if $C \sqsubseteq D$ holds in all minimal $\alctr$ canonical models of $K$. In the following we reason by contraposition by showing that 
 if $K \not\models_{\alctre + {{min-canonical}}} C \sqsubseteq D$ then there is a minimal $\alctr$ canonical model of $K$ in which $C \sqsubseteq D$ does not hold, which by what just recalled (Theorem \ref{Theorem_RC_TBox}) entails that $C \sqsubseteq D \not\in \overline{TBox}$

By contraposition suppose that $K \not\models_{\alctre + {{min-canonical}}} C \sqsubseteq D$. 
Then there is a minimal canonical enriched $\alctre$ model  $\emme =  \langle \Delta, <_{A_1}, \dots ,<_{A_n}, <, I  \rangle$  of $K$ and an $y \in C^I$ such that $y \not\in D^I$. 
All consistent sets of concepts consistent with $K$ w.r.t. $\alctre$ are also consistent with $K$ with respect to $\alctr$, and viceversa (by Theorem \ref{equivalence-multiple})% if $K \not\models_{\alctre} C_1 \sqcap C_2 \sqcap \dots \sqcap C_n \sqsubseteq \perp$, then also $K \not\models_{\alctr} C_1 \sqcap C_2 \sqcap \dots \sqcap C_n \sqsubseteq \perp$ ). 
By definition of {\em canonical}, there is also a canonical $\alctr$ model of $K$ with domain ${\Delta}_{RC}=\Delta$ and $I_{RC}$ is defined as $I$ on atomic concepts.  %such that for all concepts $C$ $C^I \subseteq C^{I'}$, and there is also a minimal such model. 
Let  $\emme_{RC} =  \langle {\Delta}_{RC},  <_{RC}, I_{RC}  \rangle$ be this model. If $C$ does not contain the $\tip$ operator, we are done: also in $\emme_{RC}$ 
$y \in C^{I_{RC}}$ such that $y \not\in D^{I_{RC}}$, hence $C \sqsubseteq D$ does not hold in  $\emme_{RC}$, and $C \sqsubseteq D \not\in \overline{TBox}$.

If $\tip$ occurs in $C$, and $C = \tip(C')$, we still need to show that also in $\emme_{RC}$, as in $\emme$, 
$y \in min_{<_{RC}}({C'}^I)$.  We prove this by showing that for all $x,y \in \Delta$ if $x <_{RC}y$ in $\emme_{RC}$, then also 
$x <y$ in $\emme$ or in other terms that if $k_{\emme_{RC}}(x) < k_{\emme_{RC}}(y)$, then  $k_{\emme}(x) < k_{\emme}(y)$ in $\emme$. 

First of all, notice that by definition of rank of a formula, of the sets $E_0 \dots E_n$ of Proposition \ref{definition_exceptionality} , together with Proposition  13 of \cite{AIJ15}, we have that if 
$k_{\emme_{RC}}(x)= i$, then $x \models E_i$, whereas $x \not\models E_{i-1} - E_i$. By the same definitions and propositions, for all concepts $C$, if rank($C$)$=i$, then $k_{\emme_{RC}}(C)= i$.  We will use these facts in the following. 

The proof is by induction on  $k_{\emme_{RC}}(x)$, the rank  of $x$ in $\emme_{RC}$.
 
(a): let $k_{\emme_{RC}}(x)= 0$ and $k_{\emme_{RC}}(y)>0$. Since $x$ does not violate any inclusion, also in $\emme$ (by minimality of $\emme$) for all preference relations $<_{A_j}$  $k_{{A_j}_\emme}(x) = 0 $, and also  $k_{\emme}(x)= 0$. This cannot hold for $y$, for which $k_{\emme}(y)> 0$ (otherwise $\emme$ would violate %$\tip(B_i) \sqsubseteq A_i$, and 
$K$, against the hypothesis). 

(b): let $k_{\emme_{RC}}(x)= i < k_{\emme_{RC}}(y)$, i.e. $x<_{RC}y$.
As $x <_{RC} y$ in $\emme_{RC}$ and the rank of $x$ in $\emme_{RC}$ is $i$,
there must be a $\tip(B_i) \sqsubseteq A_i \in E_i - E_{i+1}$ such that $x \in (\neg B_i \sqcup A_i)^I$ whereas $y \in (B_i \sqcap \neg A_i)^I$. Before we proceed let us notice that by definition of $E_i$, as well as by what stated just above on the relation between rank of a concept and $k_{\emme_{RC}}$, $k_{\emme_{RC}}(B_i)= k_{\emme_{RC}}(x)$ . We will use this fact below.

We want to prove that condition (b) holds to conclude that $x<y$ (i.e. $k_{\emme}(x) < k_{\emme}(y)$).
We already know that $y$ violates the inclusion $\tip(B_i) \sqsubseteq A_i \in E_i - E_{i+1}$. 
We show that, for any inclusion $\tip(B_l) \sqsubseteq A_l \in K$ that is violated by $x$, it holds that $k_{\emme}(B_l) < k_{\emme}(B_i)$,
so that, by (b), $x<y$.

Let $\tip(B_l) \sqsubseteq A_l \in K$ violated by $x$, i.e. such that  $x \in (B_l \sqcap \neg A_l)^I$.
Since $\emme_{RC}$ satisfies $K$,  there must be $x' <_{RC} x$ with $x' \in (B_l)^I$. 
As $k_{\emme_{RC}}(x')< i$, 
by inductive hypothesis,  $x' < x$ in $\emme$. 
As $x' \in {B_l}^I$, $k_{\emme}(B_l) \leq k_{\emme}(x')$. 
Also, we can show that $k_{\emme}(x') < k_{\emme}(B_i) $ (see below).
Hence, $k_{\emme}(B_l) < k_{\emme}(B_i)$, and by condition (b), it holds that $x < y$ in $\emme$. 

To conclude the proof, let us show that $k_{\emme}(x') < k_{\emme}(B_i) $.
Indeed, as $k_{\emme_{RC}}(B_i)= k_{\emme_{RC}}(x)=i$, we have that
for all $y \in {B_i}^I$, $k_{\emme_{RC}}(y) \geq i$. 
Hence, $k_{\emme_{RC}}(x') < k_{\emme_{RC}}(y)$.
Thus, by inductive hypothesis, $k_{\emme}(x') < k_{\emme}(y)$, for all $y \in {B_i}^I$,
that is, $k_{\emme}(x') < k_{\emme}(B_i)$.

With these facts,  since $y \in min_<({C'}^I)$  holds in $\emme$, also $y \in min_{<_{RC}}({C'}^{I_{RC}})$ in $\emme_{RC}$, hence $\tip(C') \sqsubseteq D$ does not hold in $\emme_{RC}$, and $C \sqsubseteq D = \tip(C') \sqsubseteq D \not\in \overline{TBox}$.

The theorem follows by contraposition.
\normalcolor
\end{proof}
}%fine hide

\hide{\section{A syntactic characterization: the multipreferential-closure}

In this section we propose a syntactic characterization of our semantics, that we call {\em multipreferential-closure} (MP-closure, for short),
and we compare our approach to lexicographic closure.
The idea is to exploit the rational closure construction, both for reasoning on the whole KB
and for reasoning about the single aspects $A_i$. 
We provide a construction to check the entailment of a subsumption query $\tip(C) \sqsubseteq D$
form a TBox which is polynomial in the size of the TBox and of the query.
We show that computing the multipreferential-closure has the same complexity as reasoning in $\alc$.

% \vspace{-0.15cm}
%\subsection{Rational Closure for $\alc$} \label{Rat_Closure_DL}
%\vspace{-0.25cm}
Let us recall the definition of rational closure in \cite{AIJ15}, which extends
the notion of rational closure proposed by Lehmann and Magidor \cite{whatdoes} to the logic $\alc$.
Here we only consider rational closure of TBox.

\begin{definition}[Exceptionality of concepts and inclusions]\label{definition_exceptionality}
Let $T_B$ be a TBox and $C$ a concept. $C$ is
said to be {\em exceptional} for $T_B$ if and only if $T_B \models_{\alctr} \tip(\top) \sqsubseteq
\neg C$. A \tip-inclusion $\tip(C) \sqsubseteq D$ is exceptional for $T_B$ if $C$ is exceptional for $T_B$. The set of \tip-inclusions of $T_B$ which are exceptional in $T_B$ will be denoted
as $\mathcal{E}$$(T_B)$.
\end{definition}

\noindent Given a DL  KB=(TBox,ABox),
it is possible to define a sequence of non increasing subsets of
TBox $E_0 \supseteq E_1, E_1 \supseteq E_2, \dots$ by letting $E_0 =\mbox{TBox}$ and, for
$i>0$, $E_i=\mathcal{E}$$(E_{i-1}) \unione \{ C \sqsubseteq D \in \mbox{TBox}$ s.t. $\tip$ does not occurr in $C\}$.
Observe that, being KB finite, there is
an $n\geq 0$ such that, for all $m> n, E_m = E_n$ or $E_m =\emptyset$.
%Observe also that the definition of the $E_i$'s is the same as the definition of the $C_i$'s in Lehmann and Magidor's rational closure \cite{KrausLehmannMagidor:90},
%except for that here, at each step, we also add all the ``strict'' inclusions $C \sqsubseteq D$ (where $\tip$ does not occur in $C$).

\begin{definition}[Rank of a concept]\label{Def:Rank of a formula_DL} A concept $C$ has {\em rank} $i$ (denoted $\rf(C)=i$) for TBox,
iff $i$ is the least natural number for which $C$ is
not exceptional for $E_{i}$. {If $C$ is exceptional for all
$E_{i}$ then $\rf(C)=\infty$ ($C$ has no rank).}
\end{definition}

As the semantics proposed in the previous section is stronger then rational closure,
given a TBox, we compute the sequence
 $E_0,E_1, \ldots, E_n$ according to the construction above. 
 Also, we consider a similar construction for each of the aspects $A_i$  and we let: 
 $E^{A_i}_0,E^{A_i}_1, \ldots, E^{A_i}_{n_i}$  sequence obtained by applying the construction above to 
 $TBox^{A_i}=\{ \tip(C) \sqsubseteq (\neg) A_i \in TBox\} \cup \{ C \sqsubseteq D \in \mbox{TBox}$: $\tip$ does not occurr in $C\}$.
 
 Given a query $\tip(B) \sqsubseteq D$, let $\rf(B)=k$.
 The idea is that, among the $B$-domain elements of rank $k$  in any canonical enlarged model of TBox, 
 we want to select those elements $x$ preferred according to some aspect $A_i$, 
 and such that no other  $B$-element of rank $i$ is preferred to $x$  with respect to another aspect $A_j$
 (according to condition (b), in Definition\ref{def-enrichedmodelR}).
Syntactically, we look for the largest set $E^{A_i}_{m^i}$ of defaults concerning aspect $A_i$
which is compatible with the inclusions in $E_k$ and with the fact that there is at least some $B$-element.

Hence, we let ${m^i} \in \{1,\ldots, n_i\}$ be the smallest integer such that: $E^{A_i}_{m^i} \cup E_k \not \models \tip(\top) \sqsubseteq \neg B$.
The $B$-elements of rank $k$ satisfying the defaults in  $E^{A_i}_{m^i}$ are those preferred among the $B$ elements with respect to aspect $A_i$.
Of course, $E^{A_i}_{m^i}$ might contain default inclusions non already present in $E_k$ or it might not.
For ${\cal L}_A=\{A_1,\ldots,A_r\}$, we let 
\begin{center}
$\Delta_0 = E^{A_1}_{m^1}  \cup \ldots \cup E^{A_r}_{m^r}$
\end{center}
The minimal domain elements satisfying the defaults in $\Delta_0 \cup E_k$ are the domain elements of rank $k$
preferred with respect to all of the aspects. This set is non-empty due to condition $(vi)$ in Definition\ref{def-enrichedmodelR}. %NO MORE
However, $\Delta_0$ might not contain $B$ elements at all,
%This is the case if
%$\Delta \cup E_k \models \tip(\top) \sqsubseteq \neg B$. In this case, we let $\Delta_0= \emptyset$.
%Otherwise, if $\Delta \cup E_k \not \models \tip(\top) \sqsubseteq \neg B$, we let $\Delta_0= \Delta$.
i.e.,
$\Delta_0 \cup E_k \models \tip(\top) \sqsubseteq \neg B$. 
In this case, we need to consider separately those B elements that are preferred with respect to maximal subsets of aspects
(as some $B$-elements may be preferred with respect to some set of aspects, while another $B$-element can be preferred with respect to another set of aspects, and the two elements may be non-comparable in the global relation $<$).
When $\Delta_0 \cup E_k \not\models \tip(\top) \sqsubseteq \neg B$, we can consider the $B$-elements satisfying all the defaults in $\Delta_0$ (which are the $B$-elements preferred with respect to all the aspects).  

\begin{proposition}
Let  $\tip(B) \sqsubseteq D$ be a query and let $k=\rf(B)$ is the rank of concept $B$ in the rational closure.
$\tip(B) \sqsubseteq D$ {\em is derivable from the MP-closure of TBox} if  
for all the maximal subsets  $\Delta$ of $\{ E^{A_1}_{m^1}, \ldots, E^{A_r}_{m^r} \}$ 
such that $\bigcup \Delta \cup E_k \not \models \tip(\top) \sqsubseteq \neg B$, we have that:
$$\bigcup \Delta \cup E_k \models_{\alctr} \tip(\top) \sqsubseteq (\neg B \sqcup D)$$
\end{proposition}

%\begin{proposition}
%Let  $\tip(B) \sqsubseteq D$ be a query and let $k=\rf(B)$ is the rank of concept $B$ in the rational closure.
%In this case, we need to reason separately for each aspect
%(as some $B$ elements may be preferred with respect to one aspect, but not with respect to the others).
%Otherwise, we have to consider the $B$ elements satisfying all the defaults in $\Delta$ (which are preferred with respect to all of the aspects).  
%$\tip(B) \sqsubseteq D$ {\em is derivable from the MP-closure of TBox} if  
%\begin{itemize}
%\item
%for $\Delta \cup E_k \not \models \tip(\top) \sqsubseteq \neg B$: if
%$\Delta \cup E_k \models_{\alctr} \tip(\top) \sqsubseteq (\neg B \sqcup D)$;
%\item
%for $\Delta \cup E_k \models \tip(\top) \sqsubseteq \neg B$: \\
%if for all $j=1,\ldots,r$, $E^{A_j}_{m^j} \cup E_k \models_{\alctr} \tip(\top) \sqsubseteq (\neg B \sqcup D)$.
%\end{itemize}
%\end{proposition}

%\begin{proposition}
%An inclusion $\tip(C) \sqsubseteq D$ {\em is derivable from the MP-closure of TBox} if  
%$\Delta_0 \cup E_k \models_{\alctr} \tip(\top) \sqsubseteq (\neg B \sqcup D)$,
%where $k=\rf(B)$ is the rank of concept $C$ in the rational closure.
%\end{proposition}

\begin{proposition}
Given a KB and a query $\tip(B) \sqsubseteq D$,
$K \models_{\alctre,{min_c}} \tip(B) \sqsubseteq D$
if and only if $\tip(C) \sqsubseteq D$ {\em is derivable from the MP-closure of TBox}
%$\Delta \cup E_k \models_{\alctr} \tip(\top) \sqsubseteq (\neg B \sqcup D)$, where $k=\rf(B)$. 
\end{proposition}

%\color{blue} 
%[[ NOTA: Mi pare che anche per rational closure  si possa provare che, per ogni concetto $B$, s.t. $\rf(B)=k$:
\begin{center}
$ \tip(B) \sqsubseteq D \in \overline{TBox}$ iff $E_k \models_{\alctr} \tip(\top) \sqsubseteq (\neg B \sqcup D)$.
\end{center}

(Only if) Supponiamo che $\rf(B )<\rf(B \sqcap \neg D)$.
Vogliamo mostrare che $E_k \models_{\alctr} \tip(\top) \sqsubseteq (\neg B \sqcup D)$.
Sia $\emme \models_{\alctr} E_k$.
Supponiamo che esista un $B$-elemento $x$ di rango $0$ in $\emme$ (se non esiste, abbiamo la tesi).
Se $x$ fosse un $B \sqcap \neg D$, avrei 
$E_k \not \models \tip(\top) \sqsubseteq \neg (B \sqcap \neg D)$, contro l'Hp che $\rf(B \sqcap \neg D)>k$.
Dunque $x$ deve essere un $D$ e $\emme \models_{\alctr} \tip(\top) \sqsubseteq  (\neg B \sqcup D)$.

(If) Supponiamo che $E_k \models_{\alctr} \tip(\top) \sqsubseteq (\neg B \sqcup D)$.
Vogliamo mostrare che: $\rf(B )<\rf(B \sqcap \neg D)$.
Dato che $\rf(B)=k$ sappiamo costruire un modello canonico minimale di TBox i cui $B$ ha rango $k$,
partendo da un modello di $E_k$ che contiene un $B$ elemento $x$ di rango 0.  ma per Hp ogni $x$ deve essere un 
$\neg B \sqcup D$ e quindi un $D$. Dalla costruzione otterro' un modello canonico minimale in cui 
i $B$ minimali sono $D$ e hanno rango $k$. Dunque in tale modello $k_\emme(B \sqcap \neg D)>k$.
Allora, per le proprieta' di rational closure, deve essere $\rf(B \sqcap \neg D)>k$. ]]

\normalcolor
While entailment in $\alctr$ can be computed in  \textsc{ExpTime} \cite{AIJ15},
verifying if a query $\tip(B) \sqsubseteq D$ holds in all the minimal canonical enriched  models of the TBox
in the worst case requires to consider an exponential number ($2^r$) of subsets $\Delta$ 
of $\{ E^{A_1}_{m^1}, \ldots, E^{A_r}_{m^r} \}$,
to find the maximal $\Delta$ such that $\bigcup \Delta \cup E_k \models_{\alctr} \tip(\top) \sqsubseteq (\neg B \sqcup D)$.
%\marginpar{\color{blue} e' esponenziale}
Hence computing MC-closure has the same complexity as computing the lexicographic colsure
(although in the last case, the bases are compared using the lexicographic ordering).

%As computing the rational closure for an $\alc$ KB is a problem in \textsc{ExpTime} \cite{AIJ15}, 
%verifying if a query $\tip(B) \sqsubseteq D$ holds in all the minimal canonical enriched  models of the TBox
%is a problem in \textsc{ExpTime} as well.  
%Indeed, computing $\Delta_0$ as defined above requires a %quadratic 
%polynomial (in the size of TBox) number of calls to an \textsc{ExpTime}
%algorithm for checking subsumption in $\alctr$.
%For TBox reasoning, 
%subsumption in $\alctr$ can be polynomially reduced to subsumption in $\alc$
%so that optimized $\alc$ prover can be used to this purpose. The encoding is the same as the one 
%introduced in \cite{DL2014} for reducing subsumption in $\shiqrt$ to subsumption in $\shiq$ (see \cite{DL2014} Proposition 3).

% \noindent The notion of rank of a formula allows to define
%the rational closure of the TBox of a KB. 
%\begin{definition}[Rational closure of TBox] \label{def:rational closureDL}
%Let $K$=(TBox,ABox) be a DL knowledge base. We define $\overline{\mathit{TBox}}$, the
%{\em rational closure}  of TBox, as
%%\vspace{-0.1cm}
%\color{black}
%\begin{center}
%    $\mbox{$\overline{\mathit{TBox}}$}=\{\tip(C) \sqsubseteq D \tc \mbox{either} \ \rf(C) < \rf(C \sqcap \nott D)$ \\ $\mbox{or} \ \rf(C)=\infty\} \ \unione \
%    \{C \sqsubseteq D \tc K \modelsalctr C \sqsubseteq D\}$
%\end{center}
%\end{definition}
Let us consider again Example \ref{example-global-relation}.
\begin{example}
Let $TBox = \{Penguin \sqsubseteq Bird, \tip(Bird) \sqsubseteq HasNiceFeather$, $\tip(Bird) \sqsubseteq Fly$,  $\tip(Penguin) \sqsubseteq \neg Fly \}$.
We have:
\begin{quote}
 $TBox^{Fly} = \{Penguin \sqsubseteq Bird$, $\tip(Bird) \sqsubseteq Fly$,  $\tip(Penguin) \sqsubseteq \neg Fly \}$ \\
 $TBox^{HasNiceFeather} = \{Penguin \sqsubseteq Bird, \tip(Bird) \sqsubseteq HasNiceFeather$\} \\
 $TBox^{Bird} = TBox^{Penguin} =  \{Penguin \sqsubseteq Bird\} $
 \end{quote}
% $\elle_A = \{HasNiceFeather, Fly, \color{blue} Bird, Penguin\}$.  
We want to check if the query: $\tip(Penguin) \sqsubseteq HasNiceFeather$ holds in all the minimal enriched canonical models of TBox.
From the rational closure of TBox, we have: $\rf(Bird)=0$,  $\rf(Penguin)=1$ and $E_1=  \{Penguin \sqsubseteq Bird$, $\tip(Penguin) \sqsubseteq \neg Fly \}$. Furthermore, we get
$\Delta_0 = E^{Fly}_{1}  \cup E^{HasNiceFeather}_{0}  \cup E^{Bird}_{0} \cup E^{Penguin}_{0}$, where:
\begin{quote}
 $ E^{Fly}_{1} = \{Penguin \sqsubseteq Bird$,  $\tip(Penguin) \sqsubseteq \neg Fly \}$ \\
 $E^{HasNiceFeather}_{0} = \{Penguin \sqsubseteq Bird, \tip(Bird) \sqsubseteq HasNiceFeather$\} \\
 $E^{Bird}_{0} =  E^{Penguin}_{0}=  \{Penguin \sqsubseteq Bird\} $.
 \end{quote}
As $\Delta_0 \cup E_1 \not \models \tip(\top) \sqsubseteq \neg Penguin$,  $\Delta= \Delta_0$ is the unique maximal subset
of $\{E^{Fly}_{1}  , E^{HasNiceFeather}_{0}  , E^{Bird}_{0} , E^{Penguin}_{0}\}$
It holds that,  $\Delta_0 \cup E_1 \models \tip(\top) \sqsubseteq (\neg$ $ Penguin \sqcup HasNiceFeather)$,
so that $\tip(Penguin) \sqsubseteq HasNiceFeather $ is derivable from the MP-closure, \normalcolor
which is in agreement with the fact that in all the minimal enriched canonical models of TBox the typical penguins have nice feather.
\end{example} 

%While in the above example the multipreferential-closure gives the same result as lexicographic closure, this is not the case in general.
%For instance, let us consider the example of musicians (Example 7 in \cite{Lehmann95}), where there are three defaults $\{(:p),(:q),(:r)\}$
%and the constraint $(\neg p \wedge \neg q) \vee \neg r$. The basis $D_1$ containing the first two defaults, is considered more plausible than the 
%basis $D_2$ containing the third default. %(only a single default). 
%As the three defaults have the same "degree of seriousness",
%situations that violate two defaults are considered less plausible then those that violate only one default, 
%and $p$ and $q$ are presumed to be true.
%In our construction, instead, we do not have a choice between accepting  $p$ and $q$, or accepting $r$. 
%Let TBox=$ \{\tip(\top) \sqsubseteq p, \; \tip(\top) \sqsubseteq q,\; \tip(\top) \sqsubseteq r, 
%\top \sqsubseteq (\neg p \sqcap \neg q) \sqcup \neg r\}$. Such TBox is inconsistent in $\alctre$ (as well as in $\alct$).
%In fact, by monotonic entailment $TBox  \models_{\alctre} \tip(\top) \sqsubseteq (p \wedge q \wedge r)$,
%and hence $TBox  \models_{\alctre} \tip(\top) \sqsubseteq \bot$ and therefore, $TBox  \models_{\alctre} \top \sqsubseteq \bot$

%In a variant of the musicians example in which  TBox=$ \{\tip(B) \sqsubseteq p, \; \tip(B) \sqsubseteq q,\; \tip(B) \sqsubseteq r, 
%\top \sqsubseteq (\neg p \sqcap \neg q) \sqcup \neg r\}$, we would conclude that $TBox  \models_{\alctre} \tip(B) \sqsubseteq \bot$,
%i.e. there cannot be any B element in any model of TBox.

\vspace{-0.2cm}
} %fine ride sezione sintassi
%We are currently working at the formulation of a syntactic characterization of the semantics which will be a strengthening of rational closure.

\section{Conclusions and Related Works}
%\vspace{-0.3cm}

A lot of work  has  been done in order to extend the basic formalism of Description Logics (DLs)  with nonmonotonic reasoning features \cite{Straccia93,baader95b,donini2002,eiter2004,lpar2007,AIJ13,kesattler,sudafricaniKR,bonattilutz,casinistraccia2010,rosatiacm,hitzlerdl,KnorrECAI12,CasiniDL2013}. 
The purpose of these extensions is to allow reasoning about \emph{prototypical properties} of individuals or classes of individuals. %, as well as combining DLs with nonmonotonic rule-based languages, such as Datalog under the answer set semantics.

The interest of rational closure for DLs is that it 
provides a significant and reasonable skeptical nonmonotonic inference mechanism, %still remaining computationally inexpensive.
while keeping the same complexity as the underlying logic.
The first notion of rational closure for DLs was defined by Casini and Straccia \cite{casinistraccia2010}. 
Their rational closure construction for $\alc$  directly uses entailment in $\alc$ over a materialization of the KB.
A variant of this notion of rational closure has been studied in \cite{CasiniDL2013}, and a semantic characterization for it has been proposed.
In \cite{dl2013,AIJ15} a notion of rational closure for the logic $\alc$ has been proposed, building on the notion of rational closure proposed by Lehmann and Magidor \cite{whatdoes}, together with a minimal model semantics characterization.

It is well known that rational closure has some weaknesses that accompany its well-known qualities, both in the context of propositional logic and in the context of Description Logics. 
Among the weaknesses is the fact that one cannot separately reason property by property,
so that, if a subclass of $C$ is exceptional for a given aspect, it is exceptional ``tout court'' and does not inherit any of the
typical properties of $C$.
Among the strengths of rational closure there is its computational lightness,  which is crucial in Description Logics. To overcome the limitations of rational closure,  
%as done in \cite{Casinistraccia2011} by  and in the lexicographic closure \cite{Casinistraccia2012}. \normalcolor
in \cite{Casinistraccia2011,CasiniJAIR2013} an approach is introduced based on the combination of rational closure and \emph{Defeasible Inheritance Networks},  while in  \cite{Casinistraccia2012} a lexicographic closure is proposed, and in \cite{Casini2014} relevant closure, a syntactic stronger version  of rational closure.% is proposed.
%To address the mentioned weakness of rational closure, we may think of attacking the problem from a semantic point of view by  considering a finer semantics where  models are equipped with several preference relations; in such a semantics it might be possible to relativize the notion of typicality, whence to reason about typical properties independently from each other. 
To address the mentioned weakness of rational closure, in this paper we have proposed a finer grained semantics of the semantics for rational closure proposed in \cite{AIJ15}, where  models are equipped with several preference relations. In such a semantics it is possible to relativize the notion of typicality, whence to reason about typical properties independently from each other. We are currently working at the formulation of a syntactic characterization of the semantics which will be a strengthening of rational closure.
%\color{blue}
% CONFRONTO CON LEXICOGRAPHIC CLOSURE \normalcolor
As the semantics we have proposed provides a strengthening of rational closure,
a natural question arises whether this semantics is equivalent to the lexicographic closure proposed in \cite{Lehmann95}.
%the answer is negative.
In particular,  lexicographic closure construction for the description logic $\alc$ has been defined in \cite{Casinistraccia2012}.
Concerning our Example \ref{esempio-strengthening of rational closure} above, our minimal model semantics gives the same results as lexicographic closure,
since $\tip(Penguin) \sqsubseteq HasNiceFeather$ can be derived from the lexicographic closure of the TBox
and $\tip(Penguin) \sqsubseteq HasNiceFeather$ holds in all the minimal canonical enriched  models of TBox.
However, a general relation needs to be established.

An approach related to our approach is given in \cite{fernandez-gil}, where it is proposed an extension of $\alct$ with several typicality operators, each corresponding to a preference relation.
This approach is related to ours although  different: the language in  \cite{fernandez-gil}  allows for several typicality operators whereas  we only have a single typicality operator.  The focus of  \cite{fernandez-gil}  is indeed different from ours, as it does not deal with rational closure, whereas this is the main contribution  of our paper.%: providing a strengthening of rational closure that allows to separately reason about different aspects. For the moment we have addressed the question from a semantic perspective, and we are working on a syntactic characterization of the resulting semantics.

%It remains to be established whether this semantics is 

\normalcolor

{\bf Acknowledgement:} This research is partially supported by INDAM-GNCS Project 2016 "Ragionamento Defeasible nelle Logiche Descrittive".
\bibliographystyle{aaai}
\bibliography{biblioCilc15bis}
\end{document}